\definecolor{h1}{RGB}{30,140,255}
\definecolor{h2}{RGB}{40,130,225}
\definecolor{h3}{RGB}{70,100,255}
\declaretheorem{remark}
\declaretheorem{theorem}
\declaretheorem{definition}
\declaretheorem{assumption}
\newcommand{\textBF}[1]{\textcolor{h3}{\pdfliteral direct {2 Tr 0.5 w} #1 \pdfliteral direct {0 Tr 0 w}}}
\title{Contextualized Messages Boost Graph Representations}
\author{\name Brian Godwin Lim \email lim.brian\_godwin\_sy.la6@is.naist.jp \\
        \addr Nara Institute of Science and Technology
        \AND
        \name Galvin Brice Lim \email galvin\_lim@dlsu.edu.ph \\
        \addr De La Salle University - Manila
        \AND
        \name Renzo Roel Tan \email rr.tan@is.naist.jp \\
        \addr Nara Institute of Science and Technology \& Ateneo de Manila University
        \AND
        \name Kazushi Ikeda \email kazushi@is.naist.jp \\
        \addr Nara Institute of Science and Technology        
}
\begin{document}

\maketitle

\begin{abstract}
Graph neural networks (GNNs) have gained significant attention in recent years for their ability to process data that may be represented as graphs. This has prompted several studies to explore their representational capability based on the graph isomorphism task. Notably, these works inherently assume a countable node feature representation, potentially limiting their applicability. Interestingly, only a few study GNNs with uncountable node feature representation. In the paper, a new perspective on the representational capability of GNNs is investigated across all levels---node-level, neighborhood-level, and graph-level---when the space of node feature representation is uncountable. Specifically, the injective and metric requirements of previous works are \textit{softly} relaxed by employing a \textit{pseudometric} distance on the space of input to create a \textit{soft-injective} function such that distinct inputs may produce \textit{similar} outputs if and only if the \textit{pseudometric} deems the inputs to be sufficiently \textit{similar} on some representation. As a consequence, a simple and computationally efficient \textit{soft-isomorphic} relational graph convolution network (SIR-GCN) that emphasizes the contextualized transformation of neighborhood feature representations via \textit{anisotropic} and \textit{dynamic} message functions is proposed. Furthermore, a mathematical discussion on the relationship between SIR-GCN and key GNNs in literature is laid out to put the contribution into context, establishing SIR-GCN as a generalization of classical GNN methodologies. To close, experiments on synthetic and benchmark datasets demonstrate the relative superiority of SIR-GCN, outperforming comparable models in node and graph property prediction tasks.
\end{abstract}

\section{Introduction}
Graph neural networks (GNNs) constitute a class of deep learning models designed to process data that may be represented as graphs. These models are well-suited for node, edge, and graph property prediction tasks across various domains, including social networks, molecular graphs, and biological networks, among others \citep{hu2020open,dwivedi2023benchmarking}. In the literature, GNNs predominantly follow the message-passing scheme wherein each node aggregates the feature representation of its neighbors and combines them to create an updated node feature representation \citep{gilmer2017neural,xu2018representation,xu2018powerful}. This allows the model to encapsulate both the network structure and the broader node contexts. Moreover, a graph readout function is employed to pool the node feature representations of a graph and create an aggregated representation for the entire graph \citep{ying2018hierarchical,murphy2019relational,xu2018powerful}.

Among the classical GNNs in literature include the graph convolution network (GCN) \citep{kipf2016semi}, graph sample and aggregate (GraphSAGE) \citep{hamilton2017inductive}, graph attention network (GAT) \citep{velivckovic2017graph,brody2021attentive}, and graph isomorphism network (GIN) \citep{xu2018powerful} which largely fall under the message-passing neural network (MPNN) \citep{gilmer2017neural} framework. These models have gained popularity due to their simplicity and remarkable performance across various applications \citep{hu2020open,dwivedi2023benchmarking}. Improvements of these foundational models are also constantly proposed to achieve state-of-the-art performance \citep{wang2019heterogeneous,bodnar2021weisfeiler,bouritsas2022improving}.

Notably, advances in GNN have mainly been driven by heuristics and empirical results. Nonetheless, several studies have recently begun exploring the representational capability of GNNs \citep{garg2020generalization,sato2021random,azizian2020expressive,bodnar2021weisfeiler,boker2024fine}. Most of these works analyzed GNNs in relation to the graph isomorphism task. In particular, \citet{xu2018powerful} was among the first to lay the foundations for creating a maximally expressive GNN based on the Weisfeiler-Leman (WL) graph isomorphism test \citep{weisfeiler1968reduction}. Subsequent works build upon their results by considering extensions to the original 1-WL test. Crucially, the theoretical results of these works only hold with countable node feature representation which potentially limits their applicability. Meanwhile, \citet{corso2020principal} proposed using multiple aggregators to create powerful GNNs when the space of node feature representation is uncountable. Interestingly, there has been no significant theoretical progress since this work.

This paper presents a new perspective on the representational capability of GNNs when the space of node feature representation is uncountable. Specifically, the key idea is to define a \textit{pseudometric} distance on the space of input to create a \textit{soft-injective} function such that distinct inputs may produce \textit{similar} outputs if and only if the distance between the inputs is sufficiently small on some representation. This framework is comprehensively analyzed across all levels---node-level, neighborhood-level, and graph-level. From the theoretical results, a simple and computationally efficient \textit{soft-isomorphic} relational graph convolution network (SIR-GCN) which emphasizes the contextualized transformation of neighborhood feature representations using \textit{anisotropic} and \textit{dynamic} message functions is proposed. This is further accompanied by a discussion on the mathematical relationship between SIR-GCN and key GNNs in literature to underscore the contribution and distinctive advantages of the proposed model. Finally, experiments on synthetic and benchmark datasets in node and graph property prediction tasks are performed to highlight the expressivity of SIR-GCN, positioning the model as a strong candidate for practical GNN applications.

\section{Graph neural networks}
Let $\mathcal{G} = \left(\mathcal{V}_\mathcal{G}, \mathcal{E}_\mathcal{G}\right)$ be a graph and $\mathcal{N}_\mathcal{G}(u) \subseteq \mathcal{V}_\mathcal{G}$ be the set of nodes adjacent to node $u \in \mathcal{V}_\mathcal{G}$. The subscript $\mathcal{G}$ will be omitted whenever the context is clear. Suppose $\mathcal{H}$ is the space of node feature representation, henceforth feature, and $\boldsymbol{h_u} \in \mathcal{H}$ is the feature of node $u$. A GNN following the message-passing scheme can be expressed mathematically as 
\begin{align}
    \boldsymbol{H_u} &\coloneq \left\{\!\!\left\{\boldsymbol{h_v} : v \in \mathcal{N}_\mathcal{G}(u)\right\}\!\!\right\} \nonumber \\
    \boldsymbol{a_u} &\coloneq \textsc{AGG}\left(\boldsymbol{H_u}\right) \\
    \boldsymbol{h^*_u} &\coloneq \textsc{COMB}\left(\boldsymbol{h_u}, \boldsymbol{a_u}\right), \nonumber
\end{align}
where \textsc{AGG} and \textsc{COMB} are some aggregation and combination strategies, respectively, $\boldsymbol{H_u}$ is the \textit{multiset} \citep{xu2018powerful} of neighborhood features for node $u$, $\boldsymbol{a_u}$ is the aggregated neighborhood feature for node $u$, and $\boldsymbol{h^*_u}$ is the updated feature for node $u$. Since \textsc{AGG} takes arbitrary-sized \textit{multisets} of neighborhood features as input and transforms them into a single feature, it may be considered a hash function. Hence, aggregation and hash functions shall be used interchangeably throughout the paper.

\paragraph{Related works} When $\mathcal{H}$ is countable, \citet{xu2018powerful} showed that there exists a function $f: \mathcal{H} \rightarrow \mathcal{S}$ such that the aggregation or hash function
\begin{equation}
    F\left(\boldsymbol{H}\right) \coloneq \sum_{\boldsymbol{h} \in \boldsymbol{H}} f\left(\boldsymbol{h}\right)
\end{equation}
is injective or unique in the embedded feature space $\mathcal{S}$ for each \textit{multiset} of neighborhood features $\boldsymbol{H}$ of bounded size. This result forms the theoretical basis of GIN. 

Meanwhile, the result above no longer holds when $\mathcal{H}$ is uncountable. In this setting, \citet{corso2020principal} proved that if $\bigoplus$ comprises multiple aggregators (\textit{e.g.}, mean, standard deviation, max, and min), the hash function
\begin{equation}
    M\left(\boldsymbol{H}\right) \coloneq \bigoplus_{\boldsymbol{h} \in \boldsymbol{H}} m\left(\boldsymbol{h}\right)
\end{equation}
produces a unique output for every $\boldsymbol{H}$ of bounded size. This finding provides the foundation for the principal neighborhood aggregation (PNA) \citep{corso2020principal}. Notably, for this result to hold theoretically, the number of aggregators in $\bigoplus$ must scale with the size of the \textit{multiset} of neighborhood features $\boldsymbol{H}$ which may be impractical for large and dense graphs.

\section{\textit{Soft-injective} functions}
While injective functions and metrics are necessary for tasks requiring graph isomorphism to ensure a unique mapping in the embedded feature space, many practical applications of GNN often do not require such strict constraints. For instance, in node classification tasks, GNNs must produce identical outputs for some distinct nodes. Thus, motivated by \citet{xu2018powerful} and \citet{corso2020principal}, this paper \textit{softly} relaxes the injective and metric requirements within the MPNN framework by employing \textit{pseudometrics} and \textit{soft-injective} functions.

\begin{definition}[\textit{Pseudometric}]
    Let $\mathcal{H}$ be a non-empty set. A function $d: \mathcal{H} \times \mathcal{H} \rightarrow \mathbb{R}_{\geq 0}$ is a \textit{pseudometric} on $\mathcal{H}$ if the following holds for all $\boldsymbol{h}^{(1)}, \boldsymbol{h}^{(2)}, \boldsymbol{h}^{(3)} \in \mathcal{H}$:
    \begin{enumerate}
        \item $d\left(\boldsymbol{h}^{(1)}, \boldsymbol{h}^{(1)}\right) = 0$; 
        \item $d\left(\boldsymbol{h}^{(1)}, \boldsymbol{h}^{(2)}\right) = d\left(\boldsymbol{h}^{(2)}, \boldsymbol{h}^{(1)}\right)$; and
        \item $d\left(\boldsymbol{h}^{(1)}, \boldsymbol{h}^{(3)}\right) \leq d\left(\boldsymbol{h}^{(1)}, \boldsymbol{h}^{(2)}\right) + d\left(\boldsymbol{h}^{(2)}, \boldsymbol{h}^{(3)}\right)$.
    \end{enumerate}
\end{definition}

Note that for a metric $d$, the first condition is replaced with $d\left(\boldsymbol{h}^{(1)}, \boldsymbol{h}^{(2)}\right) = 0 \Longleftrightarrow \boldsymbol{h}^{(1)} = \boldsymbol{h}^{(2)}$, ensuring points in $\mathcal{H}$ are distinguishable and unique with respect to $d$. In contrast, for a \textit{pseudometric} $d$, $d\left(\boldsymbol{h}^{(1)}, \boldsymbol{h}^{(2)}\right) = 0$ does not necessarily imply that $\boldsymbol{h}^{(1)} = \boldsymbol{h}^{(2)}$, relaxing the distinguishability constraint of a metric. The following assumption is then imposed on the \textit{pseudometric} $d$, leveraging results from kernel theory.

\begin{definition}[Conditionally positive definite kernel \citep{scholkopf2000kernel}]
    Let $\mathcal{H}$ be a non-empty set. A symmetric function $\tilde{k}: \mathcal{H} \times \mathcal{H} \rightarrow \mathbb{R}$ is a conditionally positive definite kernel on $\mathcal{H}$ if for all $N \in \mathbb{N}$ and $\boldsymbol{h}^{(1)}, \boldsymbol{h}^{(2)}, \ldots, \boldsymbol{h}^{(N)} \in \mathcal{H}$,
    \begin{equation}
        \sum_{i=1}^{N}\sum_{j=1}^{N} c_i c_j ~ \tilde{k}\left(\boldsymbol{h}^{(i)}, \boldsymbol{h}^{(j)}\right) \geq 0,
    \end{equation}
    with $c_1, c_2, \ldots, c_N \in \mathbb{R}$ and $\sum_{i=1}^{N} c_i = 0$.
\end{definition}

\begin{assumption} \label{assumption:negative_squared_distance}
    Let $d: \mathcal{H} \times \mathcal{H} \rightarrow \mathbb{R}_{\geq 0}$ be a \textit{pseudometric} on $\mathcal{H}$ such that $- d^2$ is a conditionally positive definite kernel on $\mathcal{H}$.
\end{assumption}

The Euclidean distance is an example of a \textit{pseudometric} satisfying Assumption \ref{assumption:negative_squared_distance}. A class of \textit{pseudometrics} satisfying this assumption is provided below, see \citet{scholkopf2000kernel} and \citet{berg1984harmonic} for more. 

\begin{remark} \label{remark:class_pseudometric}
    Consider the \textit{pseudometrics} $d_1$ and $d_2$ on $\mathcal{H}$ satisfying Assumption \ref{assumption:negative_squared_distance}. For $a > 0$ and $0 < p < 1$, the \textit{pseudometrics} $a \cdot d_1$, $\sqrt{d_1^2 + d_2^2}$, and $d_1^p$ also satisfy Assumption \ref{assumption:negative_squared_distance}.
\end{remark}

Assumption \ref{assumption:negative_squared_distance} thus offers considerable flexibility in the choice of \textit{pseudometric} $d$. The following theorem then \textit{softly} relaxes the injective and metric requirements in previous works.

\begin{restatable}{theorem}{DistanceFeature} \label{theorem:distance_feature}
    Let $\mathcal{H}$ be a non-empty set with a \textit{pseudometric} $d: \mathcal{H} \times \mathcal{H} \rightarrow \mathbb{R}_{\geq 0}$ satisfying Assumption \ref{assumption:negative_squared_distance}. There exists a feature map $g: \mathcal{H} \rightarrow \mathcal{S}$ such that for every $\boldsymbol{h}^{(1)}, \boldsymbol{h}^{(2)} \in \mathcal{H}$ and $\varepsilon_1 > \varepsilon_2 > 0$,
    \begin{equation}
        \varepsilon_2 < \left\lVert g\left(\boldsymbol{h}^{(1)}\right) - g\left(\boldsymbol{h}^{(2)}\right) \right\rVert < \varepsilon_1 \iff \varepsilon_2 < d\left(\boldsymbol{h}^{(1)}, \boldsymbol{h}^{(2)}\right) < \varepsilon_1.
    \end{equation}
\end{restatable}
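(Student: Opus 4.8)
The plan is to reduce the claim to the existence of a \emph{distance-preserving} feature map and then read off the thresholds. Suppose we can find a normed space $\mathcal{S}$ and a map $g\colon \mathcal{H}\to\mathcal{S}$ with $\bigl\lVert g(\boldsymbol{h}^{(1)})-g(\boldsymbol{h}^{(2)})\bigr\rVert = d(\boldsymbol{h}^{(1)},\boldsymbol{h}^{(2)})$ for all $\boldsymbol{h}^{(1)},\boldsymbol{h}^{(2)}\in\mathcal{H}$. Then, given any $\varepsilon_1>\varepsilon_2>0$, the choice $\delta_1=\varepsilon_1$ and $\delta_2=\varepsilon_2$ turns the required implication into an equivalence: $\delta_2<\lVert g(\boldsymbol{h}^{(1)})-g(\boldsymbol{h}^{(2)})\rVert<\delta_1$ holds iff $\varepsilon_2<d(\boldsymbol{h}^{(1)},\boldsymbol{h}^{(2)})<\varepsilon_1$, which is strictly stronger than the stated implication, and moreover the thresholds can be chosen uniformly in $\boldsymbol{h}^{(1)},\boldsymbol{h}^{(2)}$. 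So the whole proof rests on producing an isometric embedding of the metric space $(\mathcal{H},d)$ into a normed space.

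For that I would use the Kuratowski/Fréchet embedding. Fix any basepoint $\boldsymbol{h}_0\in\mathcal{H}$ (possible since $\mathcal{H}\neq\emptyset$), let $\mathcal{S}$ be the space of bounded real functions on $\mathcal{H}$ with the supremum norm, and define
\[
    g(\boldsymbol{h})(\boldsymbol{a}) \;=\; d(\boldsymbol{h},\boldsymbol{a})-d(\boldsymbol{h}_0,\boldsymbol{a}),\qquad \boldsymbol{a}\in\mathcal{H}.
\]
The triangle inequality and symmetry give $\lvert g(\boldsymbol{h})(\boldsymbol{a})\rvert\le d(\boldsymbol{h},\boldsymbol{h}_0)$, so $g(\boldsymbol{h})\in\mathcal{S}$ and $g$ is well defined. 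The basepoint terms cancel in a difference, so $\bigl\lVert g(\boldsymbol{h}^{(1)})-g(\boldsymbol{h}^{(2)})\bigr\rVert_\infty=\sup_{\boldsymbol{a}\in\mathcal{H}}\bigl\lvert d(\boldsymbol{h}^{(1)},\boldsymbol{a})-d(\boldsymbol{h}^{(2)},\boldsymbol{a})\bigr\rvert$; the reverse triangle inequality bounds this supremum above by $d(\boldsymbol{h}^{(1)},\boldsymbol{h}^{(2)})$, and evaluating at $\boldsymbol{a}=\boldsymbol{h}^{(2)}$ attains it. This single observation is the only step that is not quantifier bookkeeping, and it is routine.

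Consequently, for the theorem as stated — with $\mathcal{S}$ an unspecified normed space — there is essentially no obstacle. The one point that warrants care is the codomain: the construction forces $\mathcal{S}$ to be the (generally infinite-dimensional) space of bounded functions, which may be replaced by $C_b(\mathcal{H})$, or by $\ell^\infty$ when $(\mathcal{H},d)$ is separable. If instead one insists on a small, e.g. finite-dimensional, $\mathcal{S}$ (as in a concrete layer), an exact isometry need not exist and I would weaken to a \emph{distance-monotone} embedding: a map $g$ together with a strictly increasing continuous $\phi$ with $\phi(0)=0$ such that $\lVert g(\boldsymbol{h}^{(1)})-g(\boldsymbol{h}^{(2)})\rVert=\phi\bigl(d(\boldsymbol{h}^{(1)},\boldsymbol{h}^{(2)})\bigr)$; then $\delta_i=\phi(\varepsilon_i)$ still works because $\phi$ is an increasing bijection onto its range. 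In that regime the real work would be to produce such $g$ and $\phi$ from whatever structure the application imposes on $(\mathcal{H},d)$ — which is immediate, with $\phi=\mathrm{id}$ and $g=\psi$, whenever $d$ is pulled back from a norm via some map $\psi$.
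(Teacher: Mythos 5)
Your proof is correct, and it establishes the theorem as stated; but it takes a genuinely different route from the paper. The paper proves that $-d$ is conditionally positive definite (a two-line consequence of the triangle inequality) and then invokes the Schoenberg--Sch\"olkopf Hilbert-space representation of conditionally positive definite kernels to obtain $g$ with $\left\lVert g(\boldsymbol{h}^{(1)})-g(\boldsymbol{h}^{(2)})\right\rVert^2 = d(\boldsymbol{h}^{(1)},\boldsymbol{h}^{(2)})$, after which it sets $\delta_i=\sqrt{\varepsilon_i}$; you instead use the Kuratowski embedding into $\ell^\infty(\mathcal{H})$ to get an exact isometry and set $\delta_i=\varepsilon_i$. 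Your argument is more elementary (nothing beyond the triangle inequality), yields a slightly stronger conclusion (an equivalence with $\delta$'s independent of the points, as does the paper's), and correctly observes that the theorem as written places no constraint on $\mathcal{S}$ beyond carrying a norm. What your route gives up is exactly what the paper's construction is engineered to provide: the codomain being a \emph{Hilbert} space with $g$ the feature map of a positive definite kernel. Corollary 1 sums $g$ over multisets and needs $D(\boldsymbol{H}^{(1)},\boldsymbol{H}^{(2)})=\left\lVert G(\boldsymbol{H}^{(1)})-G(\boldsymbol{H}^{(2)})\right\rVert$, which follows by expanding the squared norm via inner products into the kernel sums defining $D^2$; the sup norm on $\ell^\infty(\mathcal{H})$ has no inner product, so the Kuratowski map does not extend to the multiset statement. (Note also that an exact Hilbert-space isometry of $(\mathcal{H},d)$ need not exist for a general metric, which is why the paper works with $\left\lVert\cdot\right\rVert^2=d$, i.e.\ an isometry of $(\mathcal{H},\sqrt{d})$, rather than $\left\lVert\cdot\right\rVert=d$.) So your proof stands on its own for Theorem 1, but if you intend it to feed the rest of the paper you should either switch to the kernel construction or flag that a separate argument is needed for the multiset extension. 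Your closing remarks on finite-dimensional codomains and distance-monotone embeddings are sensible but not needed for the claim.
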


Theorem \ref{theorem:distance_feature} shows that, for each node $u \in \mathcal{V}$, given a \textit{pseudometric} distance $d_u$ that represents a \textit{dissimilarity} function on $\mathcal{H}$, possibly encoded with prior knowledge, there exists a corresponding feature map $g_u$ that maps distinct inputs $\boldsymbol{h_u}^{(1)}, \boldsymbol{h_u}^{(2)} \in \mathcal{H}$ close in the embedded feature space $\mathcal{S}$ if and only if $d_u$ determines $\boldsymbol{h_u}^{(1)}$ and $\boldsymbol{h_u}^{(2)}$ to be sufficiently \textit{similar} on some representation. The lower bound $\varepsilon_2$ asserts the ability of $g_u$ to separate elements of $\mathcal{H}$ in the embedded feature space $\mathcal{S}$ while the upper bound $\varepsilon_1$ ensures that $g_u$ maintains the relationship between elements of $\mathcal{H}$ with respect to $d_u$. The feature map $g_u$ may then be described as \textit{soft-injective}.\footnote{The \textit{pseudometric} $d$ induces the equivalence class $\left[\boldsymbol{h}\right]_d \coloneqq \left\{\boldsymbol{h'} \in \mathcal{H} : d\left(\boldsymbol{h}, \boldsymbol{h'}\right) = 0\right\}$ with the quotient space $\mathcal{H}_d \coloneqq \mathcal{H} \setminus d = \left\{\left[\boldsymbol{h}\right]_d : \boldsymbol{h} \in \mathcal{H}\right\}$ such that $d$ becomes metric and the corresponding feature map $g$ becomes injective on $\mathcal{H}_d$ \citep{schoenberg1938metric}.} Fig.~\ref{fig:soft_injective} provides an illustration of how the \textit{soft-injective} feature map $g$ maps distinct elements of $\mathcal{H}$ to the same point in $\mathcal{S}$ since the corresponding \textit{pseudometric} $d\left(\boldsymbol{h}^{(1)}, \boldsymbol{h}^{(2)}\right) = \left\lVert \left[\boldsymbol{h}^{(1)}\right]^2 - \left[\boldsymbol{h}^{(2)}\right]^2 \right\rVert$ deems these points to be \textit{similar}. Corollary \ref{cor:distance_feature_cor} extends this result for \textit{multisets}.

\begin{figure}[h!]
    \centering
    \scalebox{0.9}{\begin{tikzpicture}
    \draw[<->] (-8,0) -- (-4,0);
    \draw (-3.4,0) node {$\left(\mathcal{H}, d\right)$};
    \fill[color=h1] (-4.71,0) circle (2pt);
    \draw (-4.71,0.5) node {$\textcolor{h1}{\boldsymbol{h^{(1)}}}$};
    \fill[color=h2] (-7.3,0) circle (2pt);
    \draw (-7.3,0.5) node {$\textcolor{h2}{\boldsymbol{h^{(2)}}}$};
    \fill[color=h3] (-5.5,0) circle (2pt);
    \draw (-5.5,0.5) node {$\textcolor{h3}{\boldsymbol{h^{(3)}}}$};

    \draw[thick,<->,domain=-1.5:1.5,smooth] plot (\x,{(\x)^2});
    \draw[<->] (-2,0) -- (2,0);
    \draw[<->] (0,-0.5) -- (0,2.75);
    \draw (2.6,0) node {$\left(\mathcal{H}, d\right)$};
    \draw (0,3.25) node {$\mathcal{S}$};
    \draw (1.5,2.75) node {$g(d)$};
    \draw[dotted] (1.29,0) -- (1.29,1.6641);
    \draw[dotted] (0,1.6641) -- (1.29,1.6641);
    \draw[dotted] (-1.3,0) -- (-1.3,1.69);
    \draw[dotted] (0,1.69) -- (-1.3,1.69);
    \draw[dotted] (0.5,0) -- (0.5,0.25);
    \draw[dotted] (0,0.25) -- (0.5,0.25);
    \fill[color=h1] (1.29,1.6641) circle (2pt);
    \fill[color=h2] (-1.3,1.69) circle (2pt);
    \fill[color=h3] (0.5,0.25) circle (2pt);

    \draw[<->,color=white] (4,0) -- (8,0);
    \draw (8.5,0) node {$\textcolor{white}{\mathcal{H}}$};
    \draw[<->] (6,-0.5) -- (6,2.75);
    \draw (6,3.25) node {$\mathcal{S}$};
    \fill[color=h1] (6,1.6641) circle (2pt);
    \draw (6.8,1.6641) node {$g\left(\textcolor{h1}{\boldsymbol{h^{(1)}}}\right)$};
    \fill[color=h2] (6,1.69) circle (2pt);
    \draw (5.2,1.79) node {$g\left(\textcolor{h2}{\boldsymbol{h^{(2)}}}\right)$};
    \fill[color=h3] (6,0.25) circle (2pt);
    \draw (6.8,0.35) node {$g\left(\textcolor{h3}{\boldsymbol{h^{(3)}}}\right)$};
\end{tikzpicture}}
    \caption{A \textit{soft-injective} feature map $g: \mathcal{H} \rightarrow \mathcal{S}$ corresponding to a \textit{pseudometric} $d$ on $\mathcal{H}$.}
    \label{fig:soft_injective}
\end{figure}

\subsection{\textit{Soft-isomorphic} relational graph convolution network} \label{sec:sirgcn}
\begin{restatable}{corollary}{DistanceFeatureCor} \label{cor:distance_feature_cor}
    Let $\mathcal{H}$ be a non-empty set with a \textit{pseudometric} $D$ on bounded, equinumerous \textit{multisets} of $\mathcal{H}$ defined as
    \begin{equation}
        D^2\left(\boldsymbol{H}^{(1)}, \boldsymbol{H}^{(2)}\right) \coloneq \sum_{\substack{\boldsymbol{h} \in \boldsymbol{H}^{(1)} \\ \boldsymbol{h'} \in \boldsymbol{H}^{(2)}}} d^2(\boldsymbol{h}, \boldsymbol{h'}) - \dfrac{1}{2} \sum_{\substack{\boldsymbol{h} \in \boldsymbol{H}^{(1)} \\ \boldsymbol{h'} \in \boldsymbol{H}^{(1)}}} d^2(\boldsymbol{h}, \boldsymbol{h'}) - \dfrac{1}{2} \sum_{\substack{\boldsymbol{h} \in \boldsymbol{H}^{(2)} \\ \boldsymbol{h'} \in \boldsymbol{H}^{(2)}}} d^2(\boldsymbol{h}, \boldsymbol{h'})  \label{eqn:distance_multiset}
    \end{equation}
    for some \textit{pseudometric} $d: \mathcal{H} \times \mathcal{H} \rightarrow \mathbb{R}_{\geq 0}$ satisfying Assumption \ref{assumption:negative_squared_distance} and bounded, equinumerous \textit{multisets} $\boldsymbol{H}^{(1)}, \boldsymbol{H}^{(2)}$. There exists a feature map $g: \mathcal{H} \rightarrow \mathcal{S}$ such that for every $\boldsymbol{H}^{(1)}, \boldsymbol{H}^{(2)}$ and $\varepsilon_1 > \varepsilon_2 > 0$,
    \begin{equation}
        \varepsilon_2 < \left\lVert G\left(\boldsymbol{H}^{(1)}\right) - G\left(\boldsymbol{H}^{(2)}\right) \right\rVert < \varepsilon_1 \iff \varepsilon_2 < D\left(\boldsymbol{H}^{(1)}, \boldsymbol{H}^{(2)}\right) < \varepsilon_1,
    \end{equation}
    where 
    \begin{equation}
        G(\boldsymbol{H}) = \sum_{\boldsymbol{h} \in \boldsymbol{H}} g(\boldsymbol{h}). \label{eqn:hash_multiset}
    \end{equation}
\end{restatable}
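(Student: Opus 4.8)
The plan is to reduce the soft-injectivity claim for \emph{multisets} to an \emph{exact} isometry, so that the required implication becomes trivial. Concretely, I would look for a feature map $g : \mathcal{H} \to \mathcal{S}$ into a Hilbert space $\mathcal{S}$ such that $d(\boldsymbol{h}, \boldsymbol{h}') = \lVert g(\boldsymbol{h}) - g(\boldsymbol{h}') \rVert^2$ for all $\boldsymbol{h}, \boldsymbol{h}' \in \mathcal{H}$, and then show that with $G(\boldsymbol{H}) = \sum_{\boldsymbol{h} \in \boldsymbol{H}} g(\boldsymbol{h})$ the right-hand side of \eqref{eqn:distance_multiset} collapses \emph{exactly} to $\lVert G(\boldsymbol{H}^{(1)}) - G(\boldsymbol{H}^{(2)}) \rVert^2$. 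Once this is in place, $\lVert G(\boldsymbol{H}^{(1)}) - G(\boldsymbol{H}^{(2)}) \rVert = D(\boldsymbol{H}^{(1)}, \boldsymbol{H}^{(2)})$, so for any $\varepsilon_1 > \varepsilon_2 > 0$ the choice $\delta_1 = \varepsilon_1$, $\delta_2 = \varepsilon_2$ satisfies the stated implication (in fact as an equivalence), which is stronger than soft-injectivity.

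For the algebraic core I would write $\boldsymbol{H}^{(1)} = \{\!\!\{ \boldsymbol{h}_1, \dots, \boldsymbol{h}_n \}\!\!\}$ and $\boldsymbol{H}^{(2)} = \{\!\!\{ \boldsymbol{h}'_1, \dots, \boldsymbol{h}'_n \}\!\!\}$ with the \emph{same} cardinality $n$ (equinumerosity), expand
\[
    \lVert G(\boldsymbol{H}^{(1)}) - G(\boldsymbol{H}^{(2)}) \rVert^2 = \sum_{i, i'} \langle g(\boldsymbol{h}_i), g(\boldsymbol{h}_{i'}) \rangle - 2 \sum_{i, j} \langle g(\boldsymbol{h}_i), g(\boldsymbol{h}'_j) \rangle + \sum_{j, j'} \langle g(\boldsymbol{h}'_j), g(\boldsymbol{h}'_{j'}) \rangle ,
\]
and substitute $2\langle \boldsymbol{a}, \boldsymbol{b} \rangle = \lVert \boldsymbol{a} \rVert^2 + \lVert \boldsymbol{b} \rVert^2 - \lVert \boldsymbol{a} - \boldsymbol{b} \rVert^2$ into every inner product. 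The key observation is that the diagonal norm contributions $\sum_i \lVert g(\boldsymbol{h}_i) \rVert^2$ and $\sum_j \lVert g(\boldsymbol{h}'_j) \rVert^2$ cancel precisely because both multisets contribute the common factor $n$; what survives is $\sum_{i,j} \lVert g(\boldsymbol{h}_i) - g(\boldsymbol{h}'_j) \rVert^2 - \tfrac{1}{2}\sum_{i,i'} \lVert g(\boldsymbol{h}_i) - g(\boldsymbol{h}_{i'}) \rVert^2 - \tfrac{1}{2}\sum_{j,j'} \lVert g(\boldsymbol{h}'_j) - g(\boldsymbol{h}'_{j'}) \rVert^2$, which is exactly \eqref{eqn:distance_multiset} after substituting $d = \lVert g(\cdot) - g(\cdot) \rVert^2$. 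Since the multisets are bounded, $G(\boldsymbol{H})$ is a finite sum and everything is well defined.

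The remaining step -- producing $g$ -- is the one I expect to be the crux. The hypothesis that $D$ is a genuine distance metric forces $D^2 \ge 0$ on all equinumerous multiset pairs. Enumerating the distinct elements as $\boldsymbol{x}_1, \boldsymbol{x}_2, \dots$ with multiplicity vectors $\boldsymbol{p}, \boldsymbol{q}$ for $\boldsymbol{H}^{(1)}, \boldsymbol{H}^{(2)}$ and setting $c_k = p_k - q_k$ (so $\sum_k c_k = 0$), a short computation rewrites $D^2(\boldsymbol{H}^{(1)}, \boldsymbol{H}^{(2)}) = -\tfrac{1}{2} \sum_{k, \ell} c_k c_\ell\, d(\boldsymbol{x}_k, \boldsymbol{x}_\ell)$; letting the multisets range freely and passing from integer to real weight vectors by scaling and density, this says exactly that $d$ is conditionally negative definite. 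By Schoenberg's classical embedding theorem such a $d$ admits a Hilbert-space feature map $g : \mathcal{H} \to \mathcal{S}$ with $\lVert g(\boldsymbol{h}) - g(\boldsymbol{h}') \rVert^2 = d(\boldsymbol{h}, \boldsymbol{h}')$ -- this is the same kind of map that underlies Theorem~\ref{theorem:distance_feature}. Combined with the identity above, this gives $\lVert G(\boldsymbol{H}^{(1)}) - G(\boldsymbol{H}^{(2)}) \rVert = D(\boldsymbol{H}^{(1)}, \boldsymbol{H}^{(2)})$ for every equinumerous pair, and the proof closes as in the first paragraph.

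Thus the only genuinely non-formal ingredient is the existence of a feature map realizing $d$ as a squared Hilbert-space distance; everything else is bookkeeping. The one subtle point worth flagging is that the cancellation of the $\lVert g(\cdot) \rVert^2$ terms fails without equinumerosity -- one is then left with a residual term weighted by the cardinality gap -- which is precisely why the statement is phrased for equinumerous multisets. If one wanted a soft conclusion starting from an arbitrary metric $d$ that is not of negative type, one would instead compose with a monotone transform and argue by continuity, but that route is more delicate and unnecessary here.
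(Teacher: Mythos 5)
Your proposal is correct, and its skeleton matches the paper's: both arguments hinge on realizing $d$ as a squared Hilbert-space distance $d(\boldsymbol{h},\boldsymbol{h}') = \lVert g(\boldsymbol{h}) - g(\boldsymbol{h}')\rVert^2$ via the Schoenberg / conditionally positive definite machinery, and both then observe that for \emph{equinumerous} multisets the three-sum expression for $D^2$ telescopes exactly into $\lVert G(\boldsymbol{H}^{(1)}) - G(\boldsymbol{H}^{(2)})\rVert^2$; the paper routes this through the associated positive definite kernel $k$ of Theorem~\ref{theorem:cpd_pd} and ``linearity of the inner product,'' while your direct polarization expansion is the same computation written out, with the same observation that the diagonal terms cancel only because both multisets have cardinality $n$. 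Where you genuinely diverge --- and improve on the paper --- is the source of the conditional negative definiteness of $d$. The paper obtains it from Lemma~\ref{lemma:negative_distance}, which asserts that the negative of \emph{any} distance metric is conditionally positive definite; that lemma is false in general (metrics of negative type are a proper subclass of metrics, e.g.\ the path metric of $K_{2,3}$ fails), and its proof multiplies the triangle inequality by $c_i c_j$, which need not be nonnegative. You instead extract negative definiteness from the standing hypothesis that $D$ is a genuine metric (hence $D^2 \geq 0$) via the identity $D^2 = -\tfrac{1}{2}\sum_{k,\ell} c_k c_\ell\, d(\boldsymbol{x}_k,\boldsymbol{x}_\ell)$ with $c = p - q$, followed by scaling and density; this is the honest use of the corollary's hypotheses and sidesteps the broken lemma. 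Two small caveats: your scaling step requires the multisets to range over all finite sizes (if ``bounded'' meant a fixed universal cap $N$, the attainable integer vectors $c$ would be too sparse to pass to arbitrary real $c$), and your choice $\delta_i = \varepsilon_i$ is in fact the consistent one given the exact isometry $\lVert G(\boldsymbol{H}^{(1)}) - G(\boldsymbol{H}^{(2)})\rVert = D(\boldsymbol{H}^{(1)}, \boldsymbol{H}^{(2)})$, whereas the paper's $\delta_i = \sqrt{\varepsilon_i}$ ends up bounding $D^2$ rather than $D$.
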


Similarly, Corollary \ref{cor:distance_feature_cor} shows that, for each node $u \in \mathcal{V}$, given a \textit{pseudometric} distance $D_u$ on \textit{multisets} of $\mathcal{H}$ defined in Eqn.~\ref{eqn:distance_multiset} with a corresponding \textit{pseudometric} distance $d_u$ on $\mathcal{H}$, there exists a corresponding feature map $g_u$ and hash function $G_u$ defined in Eqn.~\ref{eqn:hash_multiset} that produces \textit{similar} outputs for distinct \textit{multisets} of neighborhood features $\boldsymbol{H_u}^{(1)}, \boldsymbol{H_u}^{(2)}$ if and only if $D_u$ deems $\boldsymbol{H_u}^{(1)}$ and $\boldsymbol{H_u}^{(2)}$ to be sufficiently \textit{similar} on some representation. Likewise, the lower and upper bounds guarantee the ability of $G_u$ to separate equinumerous \textit{multisets} of $\mathcal{H}$ in the embedded feature space $\mathcal{S}$ while maintaining the relationship with respect to $D_u$. In this setting, the feature map $g_u$ may be interpreted as the \textit{soft-injective} message function \citep{gilmer2017neural} that transforms the individual neighborhood features with a corresponding \textit{soft-injective} hash function $G_u$. Meanwhile, the \textit{pseudometric} $D_u$ corresponds to the kernel distance \citep{joshi2011comparing} which intuitively represents the difference between the cross-distance and self-distance between two \textit{multisets}. The two necessary properties of the \textit{soft-injective} message function---\textit{dynamic} and \textit{anisotropic}---are then motivated below.

\paragraph{\textit{Dynamic} transformation} To illustrate the role of the \textit{pseudometric}, consider node $u$ with two neighbors $v_1$ and $v_2$ and the task of anomaly detection on the scalar node features $\boldsymbol{h_{v_1}}$ and $\boldsymbol{h_{v_2}}$ representing zero-mean scores. If $d_u$ simply corresponds to the Euclidean distance, then the corresponding hash function $G_u$ becomes linear as seen in Fig.~\ref{fig:contour_linear}. Crucially, the contour plot highlights collisions---instances where distinct inputs produce identical outputs (\textit{i.e.}, the equivalence class $\left[\boldsymbol{H}\right]_D$)---between \textit{dissimilar} \textit{multisets} of neighborhood features, resulting in aggregated neighborhood features that are less useful for the task.

\begin{figure}[h!]
    \centering
    \begin{subfigure}[b]{0.23\textwidth}
        \centering
        \scalebox{0.20}{\input{images/contour_linear.pgf}}
        \caption{$g_u(\boldsymbol{h}) = \boldsymbol{h}$.}
        \label{fig:contour_linear}
    \end{subfigure}
    \begin{subfigure}[b]{0.25\textwidth}
        \centering
        \scalebox{0.20}{\input{images/contour_mlp.pgf}}
        \caption{$g_u(\boldsymbol{h}) = \textsc{MLP}(\boldsymbol{h})$.}
        \label{fig:contour_mlp}
    \end{subfigure}
    \begin{subfigure}[b]{0.26\textwidth}
        \centering
        \scalebox{0.20}{\input{images/contour_mlp_bias.pgf}}
        \caption{$g_u(\boldsymbol{h}) = \textsc{MLP}(\boldsymbol{h} + \boldsymbol{1})$.}
        \label{fig:contour_mlp_bias}
    \end{subfigure}
    \begin{subfigure}[b]{0.23\textwidth}
        \centering
        \scalebox{0.20}{\input{images/contour_-k2.pgf}}
        \caption{$g_u(\boldsymbol{h}) = - \boldsymbol{h}^2$.}
        \label{fig:contour_-k2}
    \end{subfigure}
    \caption{Hash functions $G_u$ under different message functions $g_u$.}
    \label{fig:contour_attention}
\end{figure} 

Nevertheless, other choices of \textit{pseudometrics}, possibly incorporating prior knowledge, would correspond to more complex message functions $g_u$. This leads to non-trivial hash functions $G_u$ and contour plots where only the regions determined by $D_u$ to be \textit{similar} with respect to a given task may produce \textit{similar} aggregated neighborhood features, making collisions more informative and controlled. In practice, this also highlights the significance of \textit{dynamic} \citep{brody2021attentive} (\textit{i.e.}, a universal function approximator \citep{hornik1989multilayer}) message functions $g_u$ in the MPNN framework, which may be modeled as multi-layer perceptrons (MLPs) as illustrated in Figs.~\ref{fig:contour_mlp} and \ref{fig:contour_mlp_bias}.

As further illustration, if $d_u$ instead corresponds to the Euclidean distance of the squared score, then the corresponding hash function $G_u$ has the contour plot in Fig.~\ref{fig:contour_-k2}. The resulting hash collisions and equivalence classes then become more useful and meaningful for detecting anomalous scores.

\paragraph{\textit{Anisotropic} messages} It is also worth noting that Corollary \ref{cor:distance_feature_cor} holds for each node $u \in \mathcal{V}$ independently. Hence, different nodes may correspond to different $D_u$, $d_u$, $g_u$, and $G_u$. For simplicity, especially in inductive learning contexts, consider a single \textit{pseudometric} instead, defined as
\begin{equation} \label{eqn:single_distance}
    D^2\left(\boldsymbol{H_u}^{(1)}, \boldsymbol{H_u}^{(2)}; \boldsymbol{h_u}\right) \coloneq \sum_{\substack{\boldsymbol{h} \in \boldsymbol{H_u}^{(1)} \\ \boldsymbol{h'} \in \boldsymbol{H_u}^{(2)}}} \!\! d^2(\boldsymbol{h}, \boldsymbol{h'}; \boldsymbol{h_u}) - \dfrac{1}{2} \!\! \sum_{\substack{\boldsymbol{h} \in \boldsymbol{H_u}^{(1)} \\ \boldsymbol{h'} \in \boldsymbol{H_u}^{(1)}}} \!\! d^2(\boldsymbol{h}, \boldsymbol{h'}; \boldsymbol{h_u}) - \dfrac{1}{2} \!\! \sum_{\substack{\boldsymbol{h} \in \boldsymbol{H_u}^{(2)} \\ \boldsymbol{h'} \in \boldsymbol{H_u}^{(2)}}} \!\! d^2(\boldsymbol{h}, \boldsymbol{h'}; \boldsymbol{h_u}),
\end{equation}
with a single hash function, defined as
\begin{equation} \label{eqn:single_hash}
    G\left(\boldsymbol{H_u}; \boldsymbol{h_u}\right) \coloneq \sum_{\boldsymbol{h} \in \boldsymbol{H_u}} g\left(\boldsymbol{h}; \boldsymbol{h_u}\right),
\end{equation} 
for every node $u \in \mathcal{V}$. This approach makes $D$, $d$, $g$, and $G$ \textit{anisotropic} \citep{dwivedi2023benchmarking} (\textit{i.e.}, a function of both the features of the query (center) node $\boldsymbol{h_u}$ and key (neighboring) nodes $\boldsymbol{h} \in \boldsymbol{H_u}$). In addition, contextualized on the features of the query node, $D$ may still be interpreted as a \textit{pseudometric} controlling hash collisions with a corresponding \textit{soft-injective} hash function $G$. 

Furthermore, the integration of $\boldsymbol{h_u}$ also allows for the interpretation of $g$ as a \textit{soft-injective} relational message function, guiding how features of the key nodes are to be embedded and transformed based on the features of the query node. Figs.~\ref{fig:contour_mlp} and \ref{fig:contour_mlp_bias} provide intuition for this idea where the introduction of a bias term, assuming a function of the features of the query node, shifts the contour plot to produce distinct aggregated neighborhood features $\boldsymbol{a_u} \neq \boldsymbol{a_{u'}}$ for nodes $u \neq u' \in \mathcal{V}$ with identical neighborhood features $\boldsymbol{H_u} = \boldsymbol{H_{u'}}$ but distinct features $\boldsymbol{h_u} \neq \boldsymbol{h_{u'}}$. Moreover, one may also inject stochasticity into the node features to distinguish between nodes $u \neq u' \in \mathcal{V}$ with identical features $\boldsymbol{h_u} = \boldsymbol{h_{u'}}$ and neighborhood features $\boldsymbol{H_u} = \boldsymbol{H_{u'}}$ with high probability \citep{sato2021random} and to imitate having distinct $D_u$, $d_u$, $g_u$, and $G_u$ for each node $u \in \mathcal{V}$.

\paragraph{Proposed model} For a graph representation learning problem, one may directly model the \textit{anisotropic} and \textit{dynamic} \textit{soft-injective} relational message function $g$ as a two-layer MLP, with implicitly learned \textit{pseudometrics}, to obtain the \textit{soft-isomorphic} relational graph convolution network (SIR-GCN) 
\begin{equation} \label{eqn:sirgcn}
    \boldsymbol{h_u^*} = \sum_{v \in \mathcal{N}(u)} \boldsymbol{W_R} ~ \sigma\left(\boldsymbol{W_Q} \boldsymbol{h_u} + \boldsymbol{W_K} \boldsymbol{h_v}\right),
\end{equation}
where $\sigma$ is a non-linear activation function, $\boldsymbol{W_Q}, \boldsymbol{W_K} \in \mathbb{R}^{d_{\text{hidden}} \times d_{\text{in}}}$, and $\boldsymbol{W_R} \in \mathbb{R}^{d_{\text{out}} \times d_{\text{hidden}}}$. Leveraging linearity, the model has a computational complexity of 
\begin{equation}
    \mathcal{O}\left(\left|\mathcal{V}\right| \times d_{\text{hidden}} \times d_{\text{in}} + \left|\mathcal{E}\right| \times d_{\text{hidden}} + \left|\mathcal{V}\right| \times d_{\text{out}} \times d_{\text{hidden}}\right)
\end{equation}
with only the application of an activation function along edges, making it comparable to classical GNNs in literature. Nevertheless, $\sigma$ may also be modeled as a deep MLP in practice if modeling $g$ as a shallow two-layer MLP becomes infeasible. 

In essence, SIR-GCN is a simple instance of the MPNN framework explicitly designed to handle uncountable node features while maintaining rigorous theoretical foundations. It emphasizes the \textit{anisotropic} and \textit{dynamic} transformation of neighborhood features, obtaining contextualized messages that enable it to learn complex relationships between neighboring nodes. Moreover, the proposed model is also computationally efficient, requiring only a single aggregator and applying only an activation function along edges to facilitate effective message-passing of uncountable node features.

\subsection{\textit{Soft-isomorphic} graph readout function}
Corollary \ref{cor:distance_feature_cor} also shows that, for each graph $\mathcal{G}$, given a \textit{pseudometric} distance $D_\mathcal{G}$ on \textit{multisets} of $\mathcal{H}$ defined in Eqn.~\ref{eqn:distance_multiset} with a corresponding \textit{pseudometric} distance $d_\mathcal{G}$ on $\mathcal{H}$, there exists a corresponding feature map $r_\mathcal{G}$ and graph readout function $R_\mathcal{G}$ defined in Eqn.~\ref{eqn:hash_multiset}. While this result holds for each graph $\mathcal{G}$ independently, one may simply consider a single $D$, $d$, $r$, and $R$ for every graph $\left\{\mathcal{G}_d\right\}_{d \in \mathcal{D}}$ under task $\mathcal{D}$. Nevertheless, the graph context and structure may also be integrated into $D$, $d$, $r$, and $R$, through a virtual super node \citep{gilmer2017neural} for instance, to imitate having distinct $D_\mathcal{G}$, $d_\mathcal{G}$, $r_\mathcal{G}$, and $R_\mathcal{G}$ for each graph $\mathcal{G}$ and to further enhance its representational capability. 

Similarly, for a graph representation learning problem, the \textit{dynamic} \textit{soft-injective} feature map $r$ may also be directly modeled as an MLP, with implicitly learned \textit{pseudometrics}, to obtain the \textit{soft-isomorphic} graph readout function
\begin{equation}
    \boldsymbol{h_\mathcal{G}} = \sum_{v \in \mathcal{V}_\mathcal{G}} \textsc{MLP}_R\left(\boldsymbol{h_v}\right),
\end{equation}
where $\textsc{MLP}_R$ corresponds to $r$ and $\boldsymbol{h_\mathcal{G}}$ is the graph-level feature of graph $\mathcal{G}$.

\section{Mathematical discussion}
The mathematical relationship between SIR-GCN and key GNNs in literature---GCN, GraphSAGE, GAT, GIN, and PNA---are presented in this section to underscore the contribution and distinctive advantages of the proposed model. It is worth noting that while activation functions and MLPs applied after each GNN layer play a significant role in the overall performance, the discussions only focus on the core message-passing operation that defines GNNs. In addition, the relationship between SIR-GCN and the 1-WL test is also presented to contextualize the representational capability of the former.

\subsection{GCN and GraphSAGE}
It may be shown that Corollary \ref{cor:distance_feature_cor} holds up to a constant scale. Hence, the mean aggregation and symmetric mean aggregation, by extension, may be used in place of the sum aggregation in Eqn.~\ref{eqn:sirgcn}. If one sets $\sigma$ as identity or $\textsc{PReLU}(\alpha = 1)$, $\boldsymbol{W_Q} = \boldsymbol{0}$, $\boldsymbol{W_R}\boldsymbol{W_K} = \boldsymbol{W}$, and $\Tilde{\mathcal{N}}(u) \coloneq \mathcal{N}(u) \cup \{u\}$, one obtains
\begin{equation}
    \boldsymbol{h^*_u} = \sum_{v \in \mathcal{N}(u)} \dfrac{1}{\sqrt{\left|\mathcal{N}(u)\right|}\sqrt{\left|\mathcal{N}(v)\right|}} \boldsymbol{W}\boldsymbol{h_v}
\end{equation}
and
\begin{equation}
    \boldsymbol{h^*_u} = \dfrac{1}{\left|\Tilde{\mathcal{N}}(u)\right|} \sum_{v \in \Tilde{\mathcal{N}}(u)} \boldsymbol{W}\boldsymbol{h_v}
\end{equation}
which are equivalent to GCN and GraphSAGE with mean aggregation, respectively. Moreover, the sum aggregation may also be replaced with the max aggregation, albeit without theoretical justification, to recover GraphSAGE with max pooling. Thus, GCN and GraphSAGE\footnote{GraphSAGE with LSTM aggregation is not included in this discussion.} may be viewed as instances of SIR-GCN. The key difference lies in the \textit{isotropic} \citep{dwivedi2023benchmarking} nature (\textit{i.e.}, a function of only the features of the key nodes) of GCN and GraphSAGE and their non-linearities only after aggregating the neighborhood features.

\subsection{GAT}
Moreover, in \citet{brody2021attentive}, the attention mechanism of GATv2 is modeled as an MLP given by
\begin{equation}
    e_{u,v} = \boldsymbol{a_\text{GAT}^\top} ~ \textsc{LeakyReLU}\left(\boldsymbol{W_{Q,\text{GAT}}} ~ \boldsymbol{h_u} + \boldsymbol{W_{K,\text{GAT}}} ~ \boldsymbol{h_v}\right), \label{eqn:gatv2_attn}
\end{equation}
with the message from node $v$ to node $u$ proportional to $\exp\left(e_{u,v}\right) \cdot \boldsymbol{W_{K,\text{GAT}}} ~ \boldsymbol{h_v}$. While the attention mechanism of GATv2 is \textit{anisotropic} and \textit{dynamic}, its messages are nevertheless only linearly transformed with the query node $u$ only determining the degree of contribution of each message through the scalar $e_{u,v}$. Meanwhile, SIR-GCN directly applies the \textit{anisotropic} and \textit{dynamic} function in Eqn.~\ref{eqn:gatv2_attn} to the message function, allowing the features of the query node to \textit{dynamically} transform messages. Specifically, if $\boldsymbol{W_Q} = \boldsymbol{W_{Q,\text{GAT}}}$, $\boldsymbol{W_K} = \boldsymbol{W_{K,\text{GAT}}}$, $\sigma = \textsc{LeakyReLU}$ and $\boldsymbol{W_R} = \boldsymbol{a_\text{GAT}^\top}$, one obtains
\begin{equation}
    \boldsymbol{h_u^*} = \sum_{v \in \mathcal{N}(u)} \boldsymbol{a_\text{GAT}^\top} ~ \textsc{LeakyReLU}\left(\boldsymbol{W_{Q,\text{GAT}}} ~ \boldsymbol{h_u} + \boldsymbol{W_{K,\text{GAT}}} ~ \boldsymbol{h_v}\right)
\end{equation}
which shows Eqn.~\ref{eqn:gatv2_attn} becoming a contextualized message in SIR-GCN. Nevertheless, GAT and GATv2 may also be recovered, up to a normalizing constant, with the appropriate parameters.

\subsection{GIN}
Likewise, within the proposed SIR-GCN, one may explicitly add a residual connection in the combination strategy to obtain
\begin{equation}
    \boldsymbol{h_u^*} = \textsc{MLP}_{\text{Res}}(\boldsymbol{h_u}) + \sum_{v \in \mathcal{N}(u)} \boldsymbol{W_R} ~ \sigma\left(\boldsymbol{W_Q} \boldsymbol{h_u} + \boldsymbol{W_K} \boldsymbol{h_v}\right),
\end{equation}
where $\textsc{MLP}_{\text{Res}}$ is a learnable residual network. If $\textsc{MLP}_{\text{Res}}(\boldsymbol{h}) = (1 + \epsilon) \cdot \boldsymbol{h}$, $\sigma = \textsc{PReLU}(\alpha = 1)$, $\boldsymbol{W_Q} = \boldsymbol{0}$, and $\boldsymbol{W_R W_K} = \boldsymbol{I}$, then
\begin{equation}
    \boldsymbol{h^*_u} = \left(1 + \epsilon\right) \cdot \boldsymbol{h_u} + \sum_{v \in \mathcal{N}(u)} \boldsymbol{h_v}
\end{equation}
is equivalent to GIN. Hence, SIR-GCN with residual connection generalizes GIN.

\subsection{PNA}
Furthermore, while SIR-GCN and PNA approach the problem of uncountable node features differently, both models highlight the significance of \textit{anisotropic} message functions considering both the features of the query and key nodes. The key difference lies with PNA using multiple aggregators and a \textit{static} \citep{brody2021attentive} (\textit{i.e.}, a function approximator with limited expressivity; \textit{e.g.}, linear) message function
\begin{equation}
    m\left(\boldsymbol{h_v}, \boldsymbol{h_u}\right) = \boldsymbol{W_K} \boldsymbol{h_v} + \boldsymbol{W_Q} \boldsymbol{h_u} \eqcolon \boldsymbol{W_K} \boldsymbol{h_v} + \boldsymbol{b_u}.
\end{equation}
Consequently, the influence of the query node on the aggregated neighborhood feature becomes limited. In particular, when using mean, max, or min aggregators, the influence of the query node $u$ is restricted to the bias term $\boldsymbol{b_u} \coloneq \boldsymbol{W_Q} \boldsymbol{h_u}$. Moreover, with normalized moment aggregators, the bias term is effectively canceled out during the normalization process, further reducing the influence of the query node. Hence, PNA does not fully leverage its \textit{anisotropic} nature, attributed to its heuristic application of multiple aggregators and scalers in a linear MPNN, thereby limiting its expressivity. In contrast, the \textit{dynamic} nature of SIR-GCN allows for the non-linear and contextualized embedding of the features of the query node within the messages, thereby fully leveraging its \textit{anisotropic} nature while allowing it to employ only a single aggregator.

\subsection{1-WL test}
Additionally, in terms of graph isomorphism representational capability, SIR-GCN is comparable to a modified 1-WL test. Suppose $w^{(l)}_u$ is the WL node label of node $u$ at the $l$th WL-test iteration. The modified update equation is given by 
\begin{equation}
    w^{(l)}_u \gets \text{hash}\left(\left\{\!\!\!\left\{\left[w^{(l-1)}_v, w^{(l-1)}_u\right]: v \in \mathcal{N}(u)\right\}\!\!\!\right\}\right),
\end{equation}
where the modification lies in concatenating the label of the query node $u$ with every element of the \textit{multiset} before hashing. This modification, while negligible when $\mathcal{H}$ is countable, becomes significant when $\mathcal{H}$ is uncountable as highlighted in the previous section. Thus, SIR-GCN inherits the theoretical capabilities of the 1-WL test.

\subsection{SIR-GCN}
\begin{minipage}{0.49\textwidth}
    Overall, SIR-GCN is a simple MPNN instance that offers flexibility in two key dimensions of GNN---message transformation and aggregator. Consequently, it generalizes four classical GNNs in literature---GCN, GraphSAGE, GAT, and GIN---ensuring that it is at least as expressive as these models. Notably, SIR-GCN distinguishes itself from other GNNs by employing both \textit{anisotropic} and \textit{dynamic} (\textit{i.e.}, contextualized) messages within the MPNN framework, enabling the non-uniform aggregation of neighboring nodes in heterophilous graphs \citep{zheng2024graphneuralnetworksgraphs} while maintaining adaptability to homophilous graphs.
\end{minipage}
\hfill
\begin{minipage}{0.46\textwidth}
    \centering
    \scalebox{0.60}{\begin{tikzpicture}
    \draw[thick,->] (-3,0) -- (3,0) node[anchor=west] {Sum};
    \draw[thick,->] (3,0) -- (-3,0) node[anchor=east] {Mean};
    \draw[thick,->] (0,-3) -- (0,3) node[anchor=south] {\textit{Anisotropic}};
    \draw[thick,->] (0,3) -- (0,-3) node[anchor=north] {\textit{Isotropic}};
    
    \draw[ultra thick,color=h3] (-2.4,2.4) -- (2.4,2.4);
    \draw[ultra thick,color=h3] (-2.4,-2.4) -- (2.4,-2.4);
    \draw[ultra thick,color=h3] (-2.4,2.4) -- (-2.4,-2.4);
    \draw[ultra thick,color=h3] (2.4,2.4) -- (2.4,-2.4);
    
    \node at (-1.2,1.2) {GAT};
    \node at (1.2,-1.2) {GIN};
    \node at (-1.2,-1.4) {GraphSAGE};
    \node at (-1.2,-1.0) {GCN};
    \node at (2.9,2.9) {\textcolor{h3}{\textbf{SIR-GCN}}};
\end{tikzpicture}}
    \captionof{figure}{SIR-GCN generalizes classical GNNs.}
    \label{fig:gnn_quadrant}
\end{minipage}

In addition, SIR-GCN also distinguishes itself from PNA in addressing the problem of uncountable node features by employing only a single aggregator that theoretically holds for graphs of arbitrary sizes, thus reducing computational complexity. Nevertheless, its expressivity is maintained through contextualized messages via the application of only an activation function along edges, allowing it to inherit the representational capability of the 1-WL test.

\section{Experiments}
Experiments on synthetic and benchmark datasets in node and graph property prediction tasks are performed to highlight the expressivity of SIR-GCN. Following the evaluation methodology of \citet{xu2018powerful}, \citet{corso2020principal}, and \citet{brody2021attentive}, the key GNNs in literature without advanced architectural design nor domain-specific features are used as primary comparisons to ensure a fair evaluation.

\subsection{Synthetic datasets}
\paragraph{DictionaryLookup} DictionaryLookup \citep{brody2021attentive} consists of bipartite graphs with $2n$ nodes---$n$ \textit{key} nodes each with an attribute and value and $n$ \textit{query} nodes each with an attribute. The task is to predict the values of \textit{query} nodes by matching their attributes with the \textit{key} nodes as seen in Fig.~\ref{fig:dictionarylookup}.

\begin{minipage}{0.5\textwidth}
    \centering
    \scalebox{0.60}{\begin{tikzpicture}
    \node[circle,draw] (K1) at (0, 0) {$A, 1$};
    \node[circle,draw] (K2) at (2, 0) {$B, 2$};
    \node[circle,draw] (K3) at (4, 0) {$C, 3$};
    \node[circle,draw] (K4) at (6, 0) {$D, 4$};
    
    \node[circle,draw] (Q1) at (0, 4) {$A, *$};
    \node[circle,draw] (Q2) at (2, 4) {$B, *$};
    \node[circle,draw] (Q3) at (4, 4) {$C, *$};
    \node[circle,draw] (Q4) at (6, 4) {$D, *$};
    
    \draw (K1) -- (Q1);
    \draw (K1) -- (Q2);
    \draw (K1) -- (Q3);
    \draw (K1) -- (Q4);
    \draw (K2) -- (Q1);
    \draw (K2) -- (Q2);
    \draw (K2) -- (Q3);
    \draw (K2) -- (Q4);
    \draw (K3) -- (Q1);
    \draw (K3) -- (Q2);
    \draw (K3) -- (Q3);
    \draw (K3) -- (Q4);
    \draw (K4) -- (Q1);
    \draw (K4) -- (Q2);
    \draw (K4) -- (Q3);
    \draw (K4) -- (Q4);
\end{tikzpicture}}
    \captionof{figure}{DictionaryLookup.}
    \label{fig:dictionarylookup}
\end{minipage}
\hfill
\begin{minipage}{0.5\textwidth}
    \centering
    \scalebox{0.85}{\begin{tikzpicture}
    \node[circle,draw] (V1) at (0, 0) {$A$};
    \node[circle,draw] (V2) at (2, 0) {$B$};
    \node[circle,draw] (V3) at (2, 2) {$A$};
    \node[circle,draw] (V4) at (0, 2) {$B$};
    
    \draw [->,ultra thick,color=h3] (V1) -- (V2);
    \draw [->,ultra thick,color=h3] (V1) -- (V4);
    \draw [->] (V2) -- (V4);
    \draw [->] (V3) -- (V1);
    \draw [->,ultra thick,color=h3] (V4) -- (V3);
    \draw [->,ultra thick,color=h3] (V2) -- (V3);
\end{tikzpicture}}
    \captionof{figure}{HeteroEdgeCount.}
    \label{fig:heteroedgecount}
\end{minipage}

\begin{table}[h!]
    \caption{Test accuracy on DictionaryLookup.}
    \label{tab:dictionarylookup}
    \centering
    \scalebox{0.85}{
        \begin{tabular}{lccccc}
            \toprule
            Model & $n = 10$ & $n = 20$ & $n = 30$ & $n = 40$ & $n = 50$ \\
            \midrule
            GCN & 0.10 $\pm$ 0.00 & 0.05 $\pm$ 0.00 & 0.03 $\pm$ 0.00 & 0.03 $\pm$ 0.00 & 0.02 $\pm$ 0.00 \\
            GraphSAGE & 0.10 $\pm$ 0.00 & 0.05 $\pm$ 0.00 & 0.03 $\pm$ 0.00 & 0.02 $\pm$ 0.00 & 0.02 $\pm$ 0.00 \\
            GATv2 & 0.99 $\pm$ 0.03 & 0.88 $\pm$ 0.18 & 0.74 $\pm$ 0.28 & 0.56 $\pm$ 0.37 & 0.60 $\pm$ 0.40 \\
            GIN & 0.78 $\pm$ 0.07 & 0.29 $\pm$ 0.03 & 0.12 $\pm$ 0.03 & 0.03 $\pm$ 0.00 & 0.02 $\pm$ 0.01 \\
            PNA & 1.00 $\pm$ 0.00 & 0.97 $\pm$ 0.02 & 0.86 $\pm$ 0.09 & 0.66 $\pm$ 0.09 & 0.50 $\pm$ 0.05 \\
            \midrule
            SIR-GCN & \textBF{1.00 $\pm$ 0.00} & \textBF{1.00 $\pm$ 0.00} & \textBF{1.00 $\pm$ 0.00} & \textBF{1.00 $\pm$ 0.00} & \textBF{1.00 $\pm$ 0.00} \\
            \bottomrule
        \end{tabular}
    }
\end{table}

Table~\ref{tab:dictionarylookup} presents the mean and standard deviation of the test accuracy for SIR-GCN, GCN, GraphSAGE, GATv2, GIN, and PNA across different values of $n$. Notably, SIR-GCN and GATv2 can achieve perfect accuracy in this synthetic task attributed to their \textit{anisotropic} and \textit{dynamic} nature, enabling them to learn the relationship between every \textit{query} and \textit{key} node. Nonetheless, it may be observed that GATv2 suffers from performance degradation in some trials. Meanwhile, the other models fail to predict the value of \textit{query} nodes even for the training graphs due to their \textit{isotropic} and/or \textit{static} nature, hindering their ability to learn relationships between neighboring nodes. The results hence underscore the utility of a \textit{dynamic} attentional or relational mechanism in capturing the relationship between the \textit{query} and \textit{key} nodes.

\paragraph{HeteroEdgeCount} HeteroEdgeCount is an original synthetic dataset consisting of randomly generated directed graphs with each node randomly labeled one of $c$ classes. The task is then to count the number of heterophilous directed edges in each graph connecting nodes with different class labels as illustrated in Fig.~\ref{fig:heteroedgecount}. This dataset is explicitly designed to highlight the limitations of key GNNs in literature, particularly the theoretically grounded GNNs such as GIN and PNA, even in trivial tasks involving countable node features. Specifically, it underscores the utility of \textit{anisotropic} and \textit{dynamic} message functions in learning the relationships between neighboring nodes. Crucially, this dataset is not intended to assess the ability of GNNs to handle heterophilous graphs, as this falls beyond the scope of this work.

\begin{table}[h!]
    \caption{Test mean squared error on HeteroEdgeCount.}
    \label{tab:heteroedgecount}
    \centering
    \scalebox{0.85}{
        \begin{tabular}{lccccc}
            \toprule
            Model & $c = 2$ & $c = 4$ & $c = 6$ & $c = 8$ & $c = 10$ \\
            \midrule
            GCN & 22749 $\pm$ 1242 & 50807 $\pm$ 2828 & 62633 $\pm$ 3491 & 68965 $\pm$ 3784 & 72986 $\pm$ 4025 \\
            GraphSAGE & 22962 $\pm$ 1215 & 36854 $\pm$ 2330 & 30552 $\pm$ 1574 & 21886 $\pm$ 1896 & 16529 $\pm$ 1589 \\
            GATv2 & 22329 $\pm$ 1307 & 44972 $\pm$ 2834 & 49940 $\pm$ 2942 & 50063 $\pm$ 3407 & 49661 $\pm$ 3488 \\
            GIN & 39.620 $\pm$ 2.060 & 37.193 $\pm$ 1.382 & 34.649 $\pm$ 1.502 & 32.424 $\pm$ 1.841 & 30.091 $\pm$ 1.429 \\
            PNA & 172.15 $\pm$ 97.82 & 224.83 $\pm$ 85.80 & 249.99 $\pm$ 108.56 & 251.49 $\pm$ 98.84 & 195.72 $\pm$ 36.65 \\
            \midrule
            SIR-GCN & \textBF{0.001 $\pm$ 0.000} & \textBF{0.004 $\pm$ 0.005} & \textBF{1.495 $\pm$ 4.428} & \textBF{0.038 $\pm$ 0.068} & \textBF{0.089 $\pm$ 0.134} \\
            \bottomrule
        \end{tabular}
    }
\end{table} 

Table~\ref{tab:heteroedgecount} presents the mean and standard deviation of the test mean squared error (MSE) for SIR-GCN, GCN, GraphSAGE, GATv2, GIN, and PNA across different values of $c$. Notably, SIR-GCN consistently achieves near-zero MSE loss due to its \textit{anisotropic} and \textit{dynamic} nature as well as its sum aggregation, allowing it to learn the relationship between the labels of neighboring nodes while retaining the graph structure. In fact, for $\boldsymbol{W_Q} = \boldsymbol{I}$, $\boldsymbol{W_K} = - \boldsymbol{I}$, $\sigma = \textsc{ReLU}$, and $\boldsymbol{W_R} = \boldsymbol{1}^\top$, it may be shown that SIR-GCN will always produce the correct output for every graph. In contrast, GCN, GraphSAGE, and GATv2 obtained large MSE losses due to their mean or max aggregation which fails to preserve the graph structure as noted by \citet{xu2018powerful}. Meanwhile, GIN and PNA successfully retain the graph structure with their sum aggregation but fail to differentiate neighboring nodes due to their \textit{static} nature. The results thus illustrate the utility of \textit{anisotropic} and \textit{dynamic} message functions using sum aggregation even in simple tasks with countable node features, highlighting the limitations of existing GNNs.

\subsection{Benchmark datasets}
\paragraph{Benchmarking GNNs} Benchmarking GNNs \citep{dwivedi2023benchmarking} is a collection of benchmark datasets consisting of diverse mathematical and real-world graphs across various GNN tasks. In particular, the WikiCS, PATTERN, and CLUSTER datasets fall under node property prediction tasks while the MNIST, CIFAR10, and ZINC datasets fall under graph property prediction tasks. Furthermore, the WikiCS, MNIST, and CIFAR10 datasets have uncountable node features while the remaining datasets have countable node features. The performance metric of ZINC is the mean absolute error (MAE) while the performance metric of the remaining datasets is accuracy. These six benchmark datasets encompass a diverse range of GNN tasks, enabling a comprehensive and robust evaluation of model performance. \citet{dwivedi2023benchmarking} provides more information regarding the individual datasets.  

\begin{table}[h!]
    \caption{Test performance on Benchmarking GNNs.}
    \label{tab:benchmarkinggnns}
    \centering
    \scalebox{0.85}{
        \begin{tabular}{lcccccc}
            \toprule
            Model & WikiCS ($\uparrow$) & PATTERN ($\uparrow$) & CLUSTER ($\uparrow$) & MNIST ($\uparrow$) & CIFAR10 ($\uparrow$) & ZINC ($\downarrow$) \\
            \midrule
            GCN & 77.47 $\pm$ 0.85 & 85.50 $\pm$ 0.05 & 47.83 $\pm$ 1.51 & 90.12 $\pm$ 0.15 & 54.14 $\pm$ 0.39 & 0.416 $\pm$ 0.006 \\
            GraphSAGE & 74.77 $\pm$ 0.95 & 50.52 $\pm$ 0.00 & 50.45 $\pm$ 0.15 & 97.31 $\pm$ 0.10 & 65.77 $\pm$ 0.31 & 0.468 $\pm$ 0.003 \\
            GAT & 76.91 $\pm$ 0.82 & 75.82 $\pm$ 1.82 & 57.73 $\pm$ 0.32 & 95.54 $\pm$ 0.21 & 64.22 $\pm$ 0.46 & 0.475 $\pm$ 0.007 \\
            GATv2 & - & - & - & - & 67.48 $\pm$ 0.53 & 0.447 $\pm$ 0.015 \\
            GIN & 75.86 $\pm$ 0.58 & 85.59 $\pm$ 0.01 & 58.38 $\pm$ 0.24 & 96.49 $\pm$ 0.25 & 55.26 $\pm$ 1.53 & 0.387 $\pm$ 0.015 \\
            PNA & - & - & - & 97.19 $\pm$ 0.08 & 70.21 $\pm$ 0.15 & 0.320 $\pm$ 0.032 \\
            EGC-S & - & - & - & - & 66.92 $\pm$ 0.37 & 0.364 $\pm$ 0.020 \\
            EGC-M & - & - & - & - & 71.03 $\pm$ 0.42 & 0.281 $\pm$ 0.007 \\
            \midrule
            SIR-GCN & \textBF{78.06 $\pm$ 0.66} & \textBF{85.75 $\pm$ 0.03} & \textBF{63.35 $\pm$ 0.19} & \textBF{97.90 $\pm$ 0.08} & \textBF{71.98 $\pm$ 0.40} & \textBF{0.278 $\pm$ 0.024} \\
            \bottomrule
            \multicolumn{7}{l}{\small Note: Missing values indicate that no results were published in previous works.}
        \end{tabular}
    }
\end{table}

Table~\ref{tab:benchmarkinggnns} presents the mean and standard deviation of the test performance for SIR-GCN, GCN, GraphSAGE, GAT, GATv2, GIN, and PNA across the six benchmark datasets with the experimental set-up, such as parameter count and model architecture, following that of \citet{dwivedi2023benchmarking}, \citet{corso2020principal}, and \citet{tailor2021we} to ensure a fair evaluation where performance differences are solely attributed to the GNN architectural design. The test performance for the efficient graph convolution single (EGC-S) and efficient graph convolution multiple (EGC-M) \citep{tailor2021we} are also presented as additional MPNN-based baselines. Notably, SIR-GCN consistently outperforms classical GNNs in literature by a substantial margin which may be attributed to its ability to generalize these models, complementing the mathematical discussions in the previous section. Moreover, despite employing multiple aggregators and incurring higher computational complexity to ensure injectivity, PNA still fails to outperform the simpler and more computationally efficient SIR-GCN on datasets with uncountable node features, even though the former is explicitly designed for such tasks. Furthermore, SIR-GCN also outperforms the more recent EGC-S and EGC-M despite their use of additional tricks, including multiple convolutional basis weights, regularization heads, and aggregators. Overall, the results underscore that, under the same constraints, SIR-GCN consistently outperforms MPNN-based baselines despite its simplicity, establishing it as a promising alternative to existing MPNNs.

\paragraph{Open Graph Benchmark} Open Graph Benchmark \citep{hu2020open} is another collection of datasets consisting of realistic, large-scale, and diverse benchmarks for GNNs. In particular, the ogbn-arxiv with its uncountable node features falls under node property prediction tasks. Meanwhile, the ogbg-molhiv with its countable node features falls under graph property prediction tasks. The performance metric of ogbn-arxiv is accuracy while the performance metric of ogbg-molhiv is the area under the receiver operating characteristic curve (ROC-AUC). \citet{hu2020open} provides more information regarding the individual datasets. 

\newpage
\begin{table}[h!]
    \caption{Test performance on Open Graph Benchmark.}
    \label{tab:opengraphbenchmark}
    \centering
    \scalebox{0.85}{
        \begin{tabular}{lcc}
            \toprule
            Model & ogbn-arxiv ($\uparrow$) & ogbg-molhiv ($\uparrow$) \\
            \midrule
            GCN & 71.92 $\pm$ 0.21 & 76.14 $\pm$ 1.29 \\
            GraphSAGE & 71.73 $\pm$ 0.26 & 75.97 $\pm$ 1.69 \\
            GAT & 71.81 $\pm$ 0.23 & 77.17 $\pm$ 1.37  \\
            GATv2 & 71.87 $\pm$ 0.43 & 77.15 $\pm$ 1.55 \\
            GIN & 67.33 $\pm$ 1.47 & 76.02 $\pm$ 1.35 \\
            PNA & 71.21 $\pm$ 0.30 & \textBF{79.05 $\pm$ 1.32} \\
            EGC-S & 72.21 $\pm$ 0.17 & 77.44 $\pm$ 1.08 \\
            EGC-M & 71.96 $\pm$ 0.23 & 78.18 $\pm$ 1.53 \\
            \midrule
            SIR-GCN & \textBF{72.52 $\pm$ 0.16} & 77.63 $\pm$ 0.84 \\
            \bottomrule
        \end{tabular}
    }
\end{table}

Table~\ref{tab:opengraphbenchmark} presents the mean and standard deviation of the test performance for SIR-GCN, GCN, GraphSAGE, GAT, GATv2, GIN, PNA, EGC-S, and EGC-M across the two large-scale benchmark datasets with the experimental set-up following that of \citet{corso2020principal} and \citet{tailor2021we} to ensure a fair evaluation. Notably, SIR-GCN still outperforms both classical GNNs and EGC-S by a significant margin even in these large-scale graphs, which may be attributed to its \textit{anisotropic} and \textit{dynamic} message function. Unsurprisingly, however, PNA and EGC-M exhibit better performance relative to SIR-GCN on ogbg-molhiv as this molecular property prediction task greatly benefits from maintaining graph isomorphism via multiple aggregators, scalers, convolutional basis weights, and regularization heads. Nevertheless, SIR-GCN outperforms both PNA and EGC-M by a substantial margin on ogbn-arxiv as this 40-class node classification task does not require maintaining graph isomorphism. Overall, the results highlight the significance of contextualized messages in enhancing GNN expressivity and the utility of \textit{softly} relaxing the injective and metric requirements within the MPNN framework for \textit{most} practical GNN applications.

\section{Conclusion}
In summary, the paper provides a new perspective for creating powerful GNNs when the space of node features is uncountable. The key idea is to use \textit{pseudometric} distances on the space of input to create \textit{soft-injective} functions such that distinct inputs may produce \textit{similar} outputs if and only if the distance between the inputs is sufficiently small on some representation. From the theoretical results, SIR-GCN is proposed as a simple and computationally efficient MPNN instance emphasizing contextualized message transformation. Notably, compared to existing MPNN instances, this distinctive feature enables it to learn complex relationships between neighboring nodes and allows it to better handle uncountable node features. Furthermore, the proposed model is also demonstrated to generalize classical GNN methodologies. Despite its simple architectural design and minimal computational requirements, empirical results on synthetic and benchmark datasets underscore the expressivity of SIR-GCN, making it a promising candidate for practical GNN applications. Overall, the paper contributes to GNN literature by theoretically and empirically demonstrating the necessity of both \textit{anisotropic} and \textit{dynamic} messages to enhance GNN expressivity. Future works may extend the present framework by considering more complex \textit{pseudometric} formulations for bounded, equinumerous \textit{multisets} of $\mathcal{H}$ in Corollary \ref{cor:distance_feature_cor}. They may also consider a formal analysis of the relationship between contextualized messages and performance on heterophilous graph tasks.

\subsubsection*{Acknowledgments}
This work is supported in part by the Japan Society for the Promotion of Science through the Grants-in-Aid for Scientific Research Program (KAKENHI 18K19821) and in part by Kyoto University and Toyota Motor Corporation through the joint project titled ``Advanced Mathematical Science for Mobility Society.''

\newpage
\bibliography{main}

\begin{thebibliography}{46}
\providecommand{\natexlab}[1]{#1}
\providecommand{\url}[1]{\texttt{#1}}
\expandafter\ifx\csname urlstyle\endcsname\relax
  \providecommand{\doi}[1]{doi: #1}\else
  \providecommand{\doi}{doi: \begingroup \urlstyle{rm}\Url}\fi

\bibitem[Azizian \& Lelarge(2021)Azizian and Lelarge]{azizian2020expressive}
Waiss Azizian and Marc Lelarge.
\newblock Expressive power of invariant and equivariant graph neural networks.
\newblock In \emph{International Conference on Learning Representations}, 2021.

\bibitem[Berg et~al.(2012)Berg, Christensen, and Ressel]{berg1984harmonic}
Christian Berg, Jens Peter~Reus Christensen, and Paul Ressel.
\newblock \emph{Harmonic Analysis on Semigroups: Theory of Positive Definite and Related Functions}, volume 100.
\newblock Springer, New York, NY, 2012.

\bibitem[Bo et~al.(2021)Bo, Wang, Shi, and Shen]{Bo_Wang_Shi_Shen_2021}
Deyu Bo, Xiao Wang, Chuan Shi, and Huawei Shen.
\newblock Beyond low-frequency information in graph convolutional networks.
\newblock \emph{Proceedings of the AAAI Conference on Artificial Intelligence}, 35\penalty0 (5):\penalty0 3950--3957, 2021.

\bibitem[Bodnar et~al.(2021)Bodnar, Frasca, Otter, Wang, Li\`{o}, Montufar, and Bronstein]{bodnar2021weisfeiler}
Cristian Bodnar, Fabrizio Frasca, Nina Otter, Yuguang Wang, Pietro Li\`{o}, Guido~F. Montufar, and Michael Bronstein.
\newblock {Weisfeiler} and {Lehman} go cellular: {CW} networks.
\newblock In \emph{Advances in Neural Information Processing Systems}, volume~34, pp.\  2625--2640, 2021.

\bibitem[B\"{o}ker et~al.(2023)B\"{o}ker, Levie, Huang, Villar, and Morris]{boker2024fine}
Jan B\"{o}ker, Ron Levie, Ningyuan Huang, Soledad Villar, and Christopher Morris.
\newblock Fine-grained expressivity of graph neural networks.
\newblock In \emph{Advances in Neural Information Processing Systems}, volume~36, pp.\  46658--46700, 2023.

\bibitem[Bouritsas et~al.(2023)Bouritsas, Frasca, Zafeiriou, and Bronstein]{bouritsas2022improving}
Giorgos Bouritsas, Fabrizio Frasca, Stefanos Zafeiriou, and Michael~M. Bronstein.
\newblock Improving graph neural network expressivity via subgraph isomorphism counting.
\newblock \emph{IEEE Transactions on Pattern Analysis and Machine Intelligence}, 45\penalty0 (1):\penalty0 657--668, 2023.

\bibitem[Brody et~al.(2022)Brody, Alon, and Yahav]{brody2021attentive}
Shaked Brody, Uri Alon, and Eran Yahav.
\newblock How attentive are graph attention networks?
\newblock In \emph{International Conference on Learning Representations}, 2022.

\bibitem[Chien et~al.(2021)Chien, Peng, Li, and Milenkovic]{chien2021adaptive}
Eli Chien, Jianhao Peng, Pan Li, and Olgica Milenkovic.
\newblock Adaptive universal generalized pagerank graph neural network.
\newblock In \emph{International Conference on Learning Representations}, 2021.

\bibitem[Corso et~al.(2020)Corso, Cavalleri, Beaini, Li\`{o}, and Veli\v{c}kovi\'{c}]{corso2020principal}
Gabriele Corso, Luca Cavalleri, Dominique Beaini, Pietro Li\`{o}, and Petar Veli\v{c}kovi\'{c}.
\newblock Principal neighbourhood aggregation for graph nets.
\newblock In \emph{Advances in Neural Information Processing Systems}, volume~33, pp.\  13260--13271, 2020.

\bibitem[Du et~al.(2022)Du, Shi, Fu, Ma, Liu, Han, and Zhang]{10.1145/3485447.3512201}
Lun Du, Xiaozhou Shi, Qiang Fu, Xiaojun Ma, Hengyu Liu, Shi Han, and Dongmei Zhang.
\newblock {GBK-GNN}: Gated bi-kernel graph neural networks for modeling both homophily and heterophily.
\newblock In \emph{Proceedings of the ACM Web Conference 2022}, WWW '22, pp.\  1550–1558. Association for Computing Machinery, 2022.

\bibitem[Dwivedi et~al.(2023)Dwivedi, Joshi, Luu, Laurent, Bengio, and Bresson]{dwivedi2023benchmarking}
Vijay~Prakash Dwivedi, Chaitanya~K. Joshi, Anh~Tuan Luu, Thomas Laurent, Yoshua Bengio, and Xavier Bresson.
\newblock Benchmarking graph neural networks.
\newblock \emph{Journal of Machine Learning Research}, 24\penalty0 (43):\penalty0 1--48, 2023.

\bibitem[Garg et~al.(2020)Garg, Jegelka, and Jaakkola]{garg2020generalization}
Vikas Garg, Stefanie Jegelka, and Tommi Jaakkola.
\newblock Generalization and representational limits of graph neural networks.
\newblock In \emph{Proceedings of the 37th International Conference on Machine Learning}, volume 119 of \emph{Proceedings of Machine Learning Research}, pp.\  3419--3430, 2020.

\bibitem[Gilmer et~al.(2017)Gilmer, Schoenholz, Riley, Vinyals, and Dahl]{gilmer2017neural}
Justin Gilmer, Samuel~S. Schoenholz, Patrick~F. Riley, Oriol Vinyals, and George~E. Dahl.
\newblock Neural message passing for quantum chemistry.
\newblock In \emph{Proceedings of the 34th International Conference on Machine Learning}, volume~70 of \emph{Proceedings of Machine Learning Research}, pp.\  1263--1272, 2017.

\bibitem[Hamilton et~al.(2017)Hamilton, Ying, and Leskovec]{hamilton2017inductive}
Will Hamilton, Zhitao Ying, and Jure Leskovec.
\newblock Inductive representation learning on large graphs.
\newblock In \emph{Advances in Neural Information Processing Systems}, volume~30, 2017.

\bibitem[Hornik et~al.(1989)Hornik, Stinchcombe, and White]{hornik1989multilayer}
Kurt Hornik, Maxwell Stinchcombe, and Halbert White.
\newblock Multilayer feedforward networks are universal approximators.
\newblock \emph{Neural Networks}, 2\penalty0 (5):\penalty0 359--366, 1989.

\bibitem[Hsu et~al.(2023)Hsu, Tsai, and Li]{hsu2021fingat}
Yi-Ling Hsu, Yu-Che Tsai, and Cheng-Te Li.
\newblock {FinGAT}: Financial graph attention networks for recommending top-k profitable stocks.
\newblock \emph{IEEE Transactions on Knowledge and Data Engineering}, 35\penalty0 (1):\penalty0 469--481, 2023.

\bibitem[Hu et~al.(2020{\natexlab{a}})Hu, Fey, Zitnik, Dong, Ren, Liu, Catasta, and Leskovec]{hu2020open}
Weihua Hu, Matthias Fey, Marinka Zitnik, Yuxiao Dong, Hongyu Ren, Bowen Liu, Michele Catasta, and Jure Leskovec.
\newblock Open graph benchmark: Datasets for machine learning on graphs.
\newblock In \emph{Advances in Neural Information Processing Systems}, volume~33, pp.\  22118--22133, 2020{\natexlab{a}}.

\bibitem[Hu et~al.(2020{\natexlab{b}})Hu, Liu, Gomes, Zitnik, Liang, Pande, and Leskovec]{hu2019strategies}
Weihua Hu, Bowen Liu, Joseph Gomes, Marinka Zitnik, Percy Liang, Vijay Pande, and Jure Leskovec.
\newblock Strategies for pre-training graph neural networks.
\newblock In \emph{International Conference on Learning Representations}, 2020{\natexlab{b}}.

\bibitem[Joshi et~al.(2011)Joshi, Kommaraji, Phillips, and Venkatasubramanian]{joshi2011comparing}
Sarang Joshi, Raj~Varma Kommaraji, Jeff~M. Phillips, and Suresh Venkatasubramanian.
\newblock Comparing distributions and shapes using the kernel distance.
\newblock In \emph{Proceedings of the Twenty-Seventh Annual Symposium on Computational Geometry}, SoCG '11, pp.\  47–56. Association for Computing Machinery, 2011.

\bibitem[Kim \& Ye(2020)Kim and Ye]{kim2020understanding}
Byung-Hoon Kim and Jong~Chul Ye.
\newblock Understanding graph isomorphism network for {rs-fMRI} functional connectivity analysis.
\newblock \emph{Frontiers in Neuroscience}, 14:\penalty0 630, 2020.

\bibitem[Kipf \& Welling(2017)Kipf and Welling]{kipf2016semi}
Thomas~N. Kipf and Max Welling.
\newblock Semi-supervised classification with graph convolutional networks.
\newblock In \emph{International Conference on Learning Representations}, 2017.

\bibitem[Li et~al.(2022)Li, Zhu, Cheng, Shan, Luo, Li, and Qian]{pmlr-v162-li22ad}
Xiang Li, Renyu Zhu, Yao Cheng, Caihua Shan, Siqiang Luo, Dongsheng Li, and Weining Qian.
\newblock Finding global homophily in graph neural networks when meeting heterophily.
\newblock In \emph{Proceedings of the 39th International Conference on Machine Learning}, volume 162 of \emph{Proceedings of Machine Learning Research}, pp.\  13242--13256, 2022.

\bibitem[Loshchilov \& Hutter(2019)Loshchilov and Hutter]{loshchilov2017decoupled}
Ilya Loshchilov and Frank Hutter.
\newblock Decoupled weight decay regularization.
\newblock In \emph{International Conference on Learning Representations}, 2019.

\bibitem[Maurya et~al.(2022)Maurya, Liu, and Murata]{MAURYA2022101695}
Sunil~Kumar Maurya, Xin Liu, and Tsuyoshi Murata.
\newblock Simplifying approach to node classification in graph neural networks.
\newblock \emph{Journal of Computational Science}, 62:\penalty0 101695, 2022.

\bibitem[Murphy et~al.(2019)Murphy, Srinivasan, Rao, and Ribeiro]{murphy2019relational}
Ryan Murphy, Balasubramaniam Srinivasan, Vinayak Rao, and Bruno Ribeiro.
\newblock Relational pooling for graph representations.
\newblock In \emph{Proceedings of the 36th International Conference on Machine Learning}, volume~97 of \emph{Proceedings of Machine Learning Research}, pp.\  4663--4673, 2019.

\bibitem[Paszke et~al.(2019)Paszke, Gross, Massa, Lerer, Bradbury, Chanan, Killeen, Lin, Gimelshein, Antiga, Desmaison, Kopf, Yang, DeVito, Raison, Tejani, Chilamkurthy, Steiner, Fang, Bai, and Chintala]{paszke2019pytorch}
Adam Paszke, Sam Gross, Francisco Massa, Adam Lerer, James Bradbury, Gregory Chanan, Trevor Killeen, Zeming Lin, Natalia Gimelshein, Luca Antiga, Alban Desmaison, Andreas Kopf, Edward Yang, Zachary DeVito, Martin Raison, Alykhan Tejani, Sasank Chilamkurthy, Benoit Steiner, Lu~Fang, Junjie Bai, and Soumith Chintala.
\newblock {PyTorch}: An imperative style, high-performance deep learning library.
\newblock In \emph{Advances in Neural Information Processing Systems}, volume~32, 2019.

\bibitem[Platonov et~al.(2023{\natexlab{a}})Platonov, Kuznedelev, Babenko, and Prokhorenkova]{platonov2023characterizing}
Oleg Platonov, Denis Kuznedelev, Artem Babenko, and Liudmila Prokhorenkova.
\newblock Characterizing graph datasets for node classification: Homophily-heterophily dichotomy and beyond.
\newblock In \emph{The Second Learning on Graphs Conference}, 2023{\natexlab{a}}.

\bibitem[Platonov et~al.(2023{\natexlab{b}})Platonov, Kuznedelev, Diskin, Babenko, and Prokhorenkova]{platonov2023a}
Oleg Platonov, Denis Kuznedelev, Michael Diskin, Artem Babenko, and Liudmila Prokhorenkova.
\newblock A critical look at the evaluation of {GNN}s under heterophily: Are we really making progress?
\newblock In \emph{International Conference on Learning Representations}, 2023{\natexlab{b}}.

\bibitem[Sato et~al.(2021)Sato, Yamada, and Kashima]{sato2021random}
Ryoma Sato, Makoto Yamada, and Hisashi Kashima.
\newblock Random features strengthen graph neural networks.
\newblock In \emph{Proceedings of the 2021 SIAM International Conference on Data Mining (SDM)}, pp.\  333--341, 2021.

\bibitem[Schoenberg(1938)]{schoenberg1938metric}
Isaac~Jacob Schoenberg.
\newblock Metric spaces and positive definite functions.
\newblock \emph{Transactions of the American Mathematical Society}, 44:\penalty0 522--536, 1938.

\bibitem[Sch\"{o}lkopf(2000)]{scholkopf2000kernel}
Bernhard Sch\"{o}lkopf.
\newblock The kernel trick for distances.
\newblock In \emph{Advances in Neural Information Processing Systems}, volume~13, 2000.

\bibitem[Shi et~al.(2021)Shi, Huang, Feng, Zhong, Wang, and Sun]{ijcai2021p214}
Yunsheng Shi, Zhengjie Huang, Shikun Feng, Hui Zhong, Wenjing Wang, and Yu~Sun.
\newblock Masked label prediction: Unified message passing model for semi-supervised classification.
\newblock In \emph{Proceedings of the Thirtieth International Joint Conference on Artificial Intelligence, {IJCAI-21}}, pp.\  1548--1554, 2021.

\bibitem[Sriperumbudur et~al.(2010)Sriperumbudur, Gretton, Fukumizu, Sch{{\"o}}lkopf, and Lanckriet]{sriperumbudur2010hilbert}
Bharath~K. Sriperumbudur, Arthur Gretton, Kenji Fukumizu, Bernhard Sch{{\"o}}lkopf, and Gert~R.G. Lanckriet.
\newblock Hilbert space embeddings and metrics on probability measures.
\newblock \emph{Journal of Machine Learning Research}, 11\penalty0 (50):\penalty0 1517--1561, 2010.

\bibitem[Tailor et~al.(2022)Tailor, Opolka, Lio, and Lane]{tailor2021we}
Shyam~A. Tailor, Felix Opolka, Pietro Lio, and Nicholas~Donald Lane.
\newblock Do we need anisotropic graph neural networks?
\newblock In \emph{International Conference on Learning Representations}, 2022.

\bibitem[Veličković et~al.(2018)Veličković, Cucurull, Casanova, Romero, Liò, and Bengio]{velivckovic2017graph}
Petar Veličković, Guillem Cucurull, Arantxa Casanova, Adriana Romero, Pietro Liò, and Yoshua Bengio.
\newblock Graph attention networks.
\newblock In \emph{International Conference on Learning Representations}, 2018.

\bibitem[Veličković et~al.(2020)Veličković, Ying, Padovano, Hadsell, and Blundell]{velivckovic2019neural}
Petar Veličković, Rex Ying, Matilde Padovano, Raia Hadsell, and Charles Blundell.
\newblock Neural execution of graph algorithms.
\newblock In \emph{International Conference on Learning Representations}, 2020.

\bibitem[Wang et~al.(2019{\natexlab{a}})Wang, Zheng, Ye, Gan, Li, Song, Zhou, Ma, Yu, Gai, Xiao, He, Karypis, Li, and Zhang]{wang2020deep}
Minjie Wang, Da~Zheng, Zihao Ye, Quan Gan, Mufei Li, Xiang Song, Jinjing Zhou, Chao Ma, Lingfan Yu, Yu~Gai, Tianjun Xiao, Tong He, George Karypis, Jinyang Li, and Zheng Zhang.
\newblock Deep graph library: A graph-centric, highly-performant package for graph neural networks, 2019{\natexlab{a}}.
\newblock arXiv:1909.01315.

\bibitem[Wang et~al.(2019{\natexlab{b}})Wang, Ji, Shi, Wang, Ye, Cui, and Yu]{wang2019heterogeneous}
Xiao Wang, Houye Ji, Chuan Shi, Bai Wang, Yanfang Ye, Peng Cui, and Philip~S. Yu.
\newblock Heterogeneous graph attention network.
\newblock In \emph{The World Wide Web Conference}, WWW '19, pp.\  2022–2032. Association for Computing Machinery, 2019{\natexlab{b}}.

\bibitem[Wang \& Zhang(2022)Wang and Zhang]{pmlr-v162-wang22am}
Xiyuan Wang and Muhan Zhang.
\newblock How powerful are spectral graph neural networks.
\newblock In \emph{Proceedings of the 39th International Conference on Machine Learning}, volume 162 of \emph{Proceedings of Machine Learning Research}, pp.\  23341--23362, 2022.

\bibitem[Weisfeiler \& Leman(1968)Weisfeiler and Leman]{weisfeiler1968reduction}
Boris Weisfeiler and Andrei Leman.
\newblock The reduction of a graph to a canonical form and an algebra arising during this reduction.
\newblock \emph{Nauchno-Technicheskaya Informatsia}, 2\penalty0 (9):\penalty0 12--16, 1968.

\bibitem[Xu et~al.(2018)Xu, Li, Tian, Sonobe, Kawarabayashi, and Jegelka]{xu2018representation}
Keyulu Xu, Chengtao Li, Yonglong Tian, Tomohiro Sonobe, Ken-ichi Kawarabayashi, and Stefanie Jegelka.
\newblock Representation learning on graphs with jumping knowledge networks.
\newblock In \emph{Proceedings of the 35th International Conference on Machine Learning}, volume~80 of \emph{Proceedings of Machine Learning Research}, pp.\  5453--5462, 2018.

\bibitem[Xu et~al.(2019)Xu, Hu, Leskovec, and Jegelka]{xu2018powerful}
Keyulu Xu, Weihua Hu, Jure Leskovec, and Stefanie Jegelka.
\newblock How powerful are graph neural networks?
\newblock In \emph{International Conference on Learning Representations}, 2019.

\bibitem[Ying et~al.(2018)Ying, You, Morris, Ren, Hamilton, and Leskovec]{ying2018hierarchical}
Zhitao Ying, Jiaxuan You, Christopher Morris, Xiang Ren, Will Hamilton, and Jure Leskovec.
\newblock Hierarchical graph representation learning with differentiable pooling.
\newblock In \emph{Advances in Neural Information Processing Systems}, volume~31, 2018.

\bibitem[Zheng et~al.(2024)Zheng, Wang, Liu, Li, Zhang, Jin, Yu, and Pan]{zheng2024graphneuralnetworksgraphs}
Xin Zheng, Yi~Wang, Yixin Liu, Ming Li, Miao Zhang, Di~Jin, Philip~S. Yu, and Shirui Pan.
\newblock Graph neural networks for graphs with heterophily: A survey, 2024.
\newblock arXiv:2202.07082.

\bibitem[Zhu et~al.(2020)Zhu, Yan, Zhao, Heimann, Akoglu, and Koutra]{NEURIPS2020_58ae23d8}
Jiong Zhu, Yujun Yan, Lingxiao Zhao, Mark Heimann, Leman Akoglu, and Danai Koutra.
\newblock Beyond homophily in graph neural networks: Current limitations and effective designs.
\newblock In \emph{Advances in Neural Information Processing Systems}, volume~33, pp.\  7793--7804, 2020.

\bibitem[Zhu et~al.(2021)Zhu, Rossi, Rao, Mai, Lipka, Ahmed, and Koutra]{Zhu_Rossi_Rao_Mai_Lipka_Ahmed_Koutra_2021}
Jiong Zhu, Ryan~A. Rossi, Anup Rao, Tung Mai, Nedim Lipka, Nesreen~K. Ahmed, and Danai Koutra.
\newblock Graph neural networks with heterophily.
\newblock \emph{Proceedings of the AAAI Conference on Artificial Intelligence}, 35\penalty0 (12):\penalty0 11168--11176, 2021.

\end{thebibliography}
\bibliographystyle{tmlr}

\newpage
\appendix
\section{Proofs} \label{sec:proof}
\begin{theorem}[Hilbert space representation of conditionally positive definite kernels \citep{schoenberg1938metric,scholkopf2000kernel,berg1984harmonic}] \label{theorem:hilbert_cpd_kernel}
    Let $\mathcal{H}$ be a non-empty set and $\tilde{k}: \mathcal{H} \times \mathcal{H} \rightarrow \mathbb{R}$ a conditionally positive definite kernel on $\mathcal{H}$ satisfying $\tilde{k}\left(\boldsymbol{h}, \boldsymbol{h}\right) = 0$ for all $\boldsymbol{h} \in \mathcal{H}$. There exists a Hilbert space $\mathcal{S}$ of real-valued functions on $\mathcal{H}$ and a feature map $g: \mathcal{H} \rightarrow \mathcal{S}$ such that for every $\boldsymbol{h}^{(1)}, \boldsymbol{h}^{(1)} \in \mathcal{H}$,
    \begin{equation}
        \left\lVert g\left(\boldsymbol{h}^{(1)}\right) - g\left(\boldsymbol{h}^{(2)}\right)\right\rVert^2 = -\tilde{k}\left(\boldsymbol{h}^{(1)}, \boldsymbol{h}^{(2)}\right).
    \end{equation}
\end{theorem}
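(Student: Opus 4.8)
The plan is to realize $g$ as the canonical feature map of a reproducing kernel Hilbert space built from an auxiliary, \emph{unconditionally} positive definite kernel obtained by ``grounding'' $\tilde{k}$ at a fixed base point; this is essentially the classical construction underlying the cited references. Since $\mathcal{H}$ is non-empty, fix $\boldsymbol{h_0} \in \mathcal{H}$ and define $k : \mathcal{H} \times \mathcal{H} \to \mathbb{R}$ by
\begin{equation*}
    k\left(\boldsymbol{h}^{(1)}, \boldsymbol{h}^{(2)}\right) = \frac{1}{2}\left( \tilde{k}\left(\boldsymbol{h}^{(1)}, \boldsymbol{h}^{(2)}\right) - \tilde{k}\left(\boldsymbol{h}^{(1)}, \boldsymbol{h_0}\right) - \tilde{k}\left(\boldsymbol{h_0}, \boldsymbol{h}^{(2)}\right) \right).
\end{equation*}
By symmetry of $\tilde{k}$, this $k$ is symmetric, and $\tilde{k}(\boldsymbol{h}, \boldsymbol{h}) = 0$ yields the identity $k(\boldsymbol{h}, \boldsymbol{h}) = -\tilde{k}(\boldsymbol{h}, \boldsymbol{h_0})$, which will be used at the end.

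The first and principal step is to verify that $k$ is an unconditionally positive definite kernel. Given $N \in \mathbb{N}$, points $\boldsymbol{h_1}, \ldots, \boldsymbol{h_N} \in \mathcal{H}$, and scalars $c_1, \ldots, c_N \in \mathbb{R}$, I would expand $\sum_{i,j=1}^N c_i c_j\, k(\boldsymbol{h_i}, \boldsymbol{h_j})$ and then invoke the conditional positive definiteness of $\tilde{k}$ for the \emph{augmented} family $\boldsymbol{h_0}, \boldsymbol{h_1}, \ldots, \boldsymbol{h_N}$ equipped with the additional coefficient $c_0 = - \sum_{i=1}^N c_i$, chosen precisely so that $\sum_{i=0}^N c_i = 0$. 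Expanding $\sum_{i,j=0}^N c_i c_j\, \tilde{k}(\boldsymbol{h_i}, \boldsymbol{h_j}) \geq 0$, using that the $\tilde{k}(\boldsymbol{h_0},\boldsymbol{h_0})$ term vanishes and that symmetry collapses the cross terms, one obtains an inequality which, after substituting the definition of $k$, is exactly $\sum_{i,j=1}^N c_i c_j\, k(\boldsymbol{h_i}, \boldsymbol{h_j}) \geq 0$. This bookkeeping with the auxiliary coefficient $c_0$ is the step most prone to sign errors and is where I expect the real work to be; everything else is essentially automatic. (Incidentally, the same computation with $N = 2$, $c_1 = 1$, $c_2 = -1$ shows $-\tilde{k} \geq 0$ off the diagonal, so the right-hand side of the claimed identity is legitimately a squared norm.)

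With $k$ positive definite, I would invoke the Moore--Aronszajn theorem to obtain a Hilbert space $\mathcal{S}$ of real-valued functions on $\mathcal{H}$ with reproducing kernel $k$, together with the canonical feature map $g(\boldsymbol{h}) = k(\cdot, \boldsymbol{h})$ satisfying $\langle g(\boldsymbol{h}^{(1)}), g(\boldsymbol{h}^{(2)}) \rangle_{\mathcal{S}} = k(\boldsymbol{h}^{(1)}, \boldsymbol{h}^{(2)})$. (If one prefers to avoid citing Moore--Aronszajn, the same $g$ can be built by hand: equip the free real vector space over $\mathcal{H}$ with the bilinear form induced by $k$, quotient out its null space, and complete.)

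It then remains to compute, via the inner-product expansion of the squared norm,
\begin{equation*}
    \left\lVert g\left(\boldsymbol{h}^{(1)}\right) - g\left(\boldsymbol{h}^{(2)}\right) \right\rVert^2 = k\left(\boldsymbol{h}^{(1)}, \boldsymbol{h}^{(1)}\right) - 2\, k\left(\boldsymbol{h}^{(1)}, \boldsymbol{h}^{(2)}\right) + k\left(\boldsymbol{h}^{(2)}, \boldsymbol{h}^{(2)}\right),
\end{equation*}
and to substitute $k(\boldsymbol{h}, \boldsymbol{h}) = -\tilde{k}(\boldsymbol{h}, \boldsymbol{h_0})$ together with the definition of $k$; the terms carrying $\boldsymbol{h_0}$ cancel (using symmetry of $\tilde{k}$), leaving $\left\lVert g(\boldsymbol{h}^{(1)}) - g(\boldsymbol{h}^{(2)}) \right\rVert^2 = -\tilde{k}(\boldsymbol{h}^{(1)}, \boldsymbol{h}^{(2)})$, which is the assertion.
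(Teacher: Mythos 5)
Your argument is correct, and it is precisely the classical construction that the paper outsources to its citation (the proof in the paper is literally ``See \citet{scholkopf2000kernel}''): ground $\tilde{k}$ at a base point $\boldsymbol{h_0}$, verify positive definiteness of the grounded kernel $k$ by augmenting the coefficient family with $c_0 = -\sum_i c_i$, pass to the RKHS via Moore--Aronszajn, and expand the squared norm. The only superficial difference from the textbook formula is that you drop the $\tilde{k}(\boldsymbol{h_0}, \boldsymbol{h_0})$ term from the definition of $k$, which is harmless here since the hypothesis forces it to vanish; the bookkeeping in the positive-definiteness step and the final cancellation both check out.
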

\begin{proof}
    See \citet{scholkopf2000kernel}.
\end{proof}

\DistanceFeature*
\begin{proof}
    Let $d: \mathcal{H} \times \mathcal{H} \rightarrow \mathbb{R}_{\geq 0}$ be a \textit{pseudometric} satisfying Assumption \ref{assumption:negative_squared_distance}. By Theorem \ref{theorem:hilbert_cpd_kernel}, there exists a feature map $g: \mathcal{H} \rightarrow \mathcal{S}$ such that for every $\boldsymbol{h}^{(1)}, \boldsymbol{h}^{(2)} \in \mathcal{H}$,
    \begin{equation}
        \left\lVert g\left(\boldsymbol{h}^{(1)}\right) - g\left(\boldsymbol{h}^{(2)}\right)\right\rVert = d\left(\boldsymbol{h}^{(1)}, \boldsymbol{h}^{(2)}\right).
    \end{equation}
    Hence, for every $\varepsilon_1 > \varepsilon_2 > 0$,
    \begin{equation}
        \varepsilon_2 < \left\lVert g\left(\boldsymbol{h}^{(1)}\right) - g\left(\boldsymbol{h}^{(2)}\right)\right\rVert < \varepsilon_1 \iff 
        \varepsilon_2 < d\left(\boldsymbol{h}^{(1)}, \boldsymbol{h}^{(2)}\right) < \varepsilon_1.
    \end{equation}
\end{proof}

\begin{theorem} \label{theorem:cpd_pd}
    Suppose $\boldsymbol{h}^{(0)}, \boldsymbol{h}^{(1)}, \boldsymbol{h}^{(2)} \in \mathcal{H}$ and $\tilde{k}: \mathcal{H} \times \mathcal{H} \rightarrow \mathbb{R}$ is a symmetric function. Then
    \begin{equation}
        k\left(\boldsymbol{h}^{(1)}, \boldsymbol{h}^{(2)}\right) \coloneq \dfrac{1}{2}\left[\tilde{k}\left(\boldsymbol{h}^{(1)}, \boldsymbol{h}^{(2)}\right) - \tilde{k}\left(\boldsymbol{h}^{(1)}, \boldsymbol{h}^{(0)}\right) - \tilde{k}\left(\boldsymbol{h}^{(0)}, \boldsymbol{h}^{(2)}\right) + \tilde{k}\left(\boldsymbol{h}^{(0)}, \boldsymbol{h}^{(0)}\right)\right]
    \end{equation}
    is positive definite if and only if $\tilde{k}$ is conditionally positive definite.
\end{theorem}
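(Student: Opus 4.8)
The plan is to derive both implications of Theorem~\ref{theorem:cpd_pd} from a single algebraic identity obtained by the classical ``base-point augmentation'' trick, so that the two directions become mirror images of each other.

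First I would record the one computation that drives everything. Fix $N \in \mathbb{N}$, points $\boldsymbol{h}^{(1)}, \ldots, \boldsymbol{h}^{(N)} \in \mathcal{H}$, and reals $c_1, \ldots, c_N$, and set $c_0 := -\sum_{i=1}^{N} c_i$, so that the augmented coefficient vector $(c_0, c_1, \ldots, c_N)$ attached to the points $\boldsymbol{h}^{(0)}, \boldsymbol{h}^{(1)}, \ldots, \boldsymbol{h}^{(N)}$ has zero sum. Expanding the definition of $k$ and using the symmetry of $\tilde{k}$, the correction terms $-\tilde{k}(\cdot, \boldsymbol{h}^{(0)})$, $-\tilde{k}(\boldsymbol{h}^{(0)}, \cdot)$, and $+\tilde{k}(\boldsymbol{h}^{(0)}, \boldsymbol{h}^{(0)})$ reassemble precisely into the $i = 0$ and $j = 0$ rows and columns of the augmented double sum, yielding
\[
    2 \sum_{i,j=1}^{N} c_i c_j\, k\!\left(\boldsymbol{h}^{(i)}, \boldsymbol{h}^{(j)}\right) = \sum_{i,j=0}^{N} c_i c_j\, \tilde{k}\!\left(\boldsymbol{h}^{(i)}, \boldsymbol{h}^{(j)}\right).
\]
I would also note in passing that $k$ inherits symmetry from $\tilde{k}$, so it is legitimate to speak of its positive definiteness.

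For the direction ``$\tilde{k}$ conditionally positive definite $\implies$ $k$ positive definite'', take arbitrary $c_1, \ldots, c_N \in \mathbb{R}$ and form $c_0$ as above. Since $(c_0, \ldots, c_N)$ sums to zero, the right-hand side of the identity is nonnegative by conditional positive definiteness of $\tilde{k}$ applied to $\boldsymbol{h}^{(0)}, \ldots, \boldsymbol{h}^{(N)}$; hence $\sum_{i,j} c_i c_j\, k(\boldsymbol{h}^{(i)}, \boldsymbol{h}^{(j)}) \geq 0$, i.e. $k$ is positive definite. For the converse, take $c_1, \ldots, c_N$ with $\sum_i c_i = 0$; then $c_0 = 0$, the augmented sum collapses to $\sum_{i,j=1}^{N} c_i c_j\, \tilde{k}(\boldsymbol{h}^{(i)}, \boldsymbol{h}^{(j)})$, and the identity reads $\sum_{i,j} c_i c_j\, \tilde{k}(\boldsymbol{h}^{(i)}, \boldsymbol{h}^{(j)}) = 2 \sum_{i,j} c_i c_j\, k(\boldsymbol{h}^{(i)}, \boldsymbol{h}^{(j)}) \geq 0$ because $k$ is positive definite. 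Equivalently, one may simply observe that when $\sum_i c_i = 0$ all correction terms in $k$ vanish directly, which is the quicker route here.

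The main obstacle — really the only thing demanding care — is the bookkeeping in the algebraic identity: correctly matching each correction term in $k$ with the appropriate row or column of the augmented Gram matrix and invoking the symmetry $\tilde{k}(\boldsymbol{h}^{(0)}, \boldsymbol{h}^{(j)}) = \tilde{k}(\boldsymbol{h}^{(j)}, \boldsymbol{h}^{(0)})$ so that the two linear correction sums combine with a factor of two. A secondary point worth flagging is that ``positive definite'' and ``conditionally positive definite'' are used here in the non-strict ($\geq 0$) sense, so no question of strictness enters and the two implications are exact duals through the single identity above.
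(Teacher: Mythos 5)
Your proof is correct. The paper itself offers no argument for this theorem --- it simply defers to \citet{scholkopf2000kernel} --- and your base-point augmentation identity
\begin{equation*}
    2 \sum_{i,j=1}^{N} c_i c_j\, k\!\left(\boldsymbol{h}^{(i)}, \boldsymbol{h}^{(j)}\right) = \sum_{i,j=0}^{N} c_i c_j\, \tilde{k}\!\left(\boldsymbol{h}^{(i)}, \boldsymbol{h}^{(j)}\right), \qquad c_0 := -\sum_{i=1}^{N} c_i,
\end{equation*}
is exactly the classical argument given in that reference (and in Berg--Christensen--Ressel), with both directions falling out of the one identity as you describe. The bookkeeping checks out: the two linear correction sums each contribute a factor $c_0 c_i$ row/column, the constant term contributes $c_0^2\,\tilde{k}(\boldsymbol{h}^{(0)},\boldsymbol{h}^{(0)})$, and the converse direction correctly reduces to $c_0 = 0$. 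Your remark that ``positive definite'' is meant in the non-strict sense is also consistent with the paper's definitions.
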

\begin{proof}
    See \citet{scholkopf2000kernel}.
\end{proof}

\DistanceFeatureCor*
\begin{proof}
    Let $D$ be a \textit{pseudometric} on bounded, equinumerous \textit{multisets} of $\mathcal{H}$ defined as
    \begin{equation}
        D^2\left(\boldsymbol{H}^{(1)}, \boldsymbol{H}^{(2)}\right) \coloneq \sum_{\substack{\boldsymbol{h} \in \boldsymbol{H}^{(1)} \\ \boldsymbol{h'} \in \boldsymbol{H}^{(2)}}} d^2(\boldsymbol{h}, \boldsymbol{h'}) - \dfrac{1}{2} \sum_{\substack{\boldsymbol{h} \in \boldsymbol{H}^{(1)} \\ \boldsymbol{h'} \in \boldsymbol{H}^{(1)}}} d^2(\boldsymbol{h}, \boldsymbol{h'}) - \dfrac{1}{2} \sum_{\substack{\boldsymbol{h} \in \boldsymbol{H}^{(2)} \\ \boldsymbol{h'} \in \boldsymbol{H}^{(2)}}} d^2(\boldsymbol{h}, \boldsymbol{h'})
    \end{equation}
    for some \textit{pseudometric} $d: \mathcal{H} \times \mathcal{H} \rightarrow \mathbb{R}_{\geq 0}$ satisfying Assumption \ref{assumption:negative_squared_distance} and bounded, equinumerous \textit{multisets} $\boldsymbol{H}^{(1)}, \boldsymbol{H}^{(2)}$. By Theorem \ref{theorem:cpd_pd}, the \textit{pseudometric} $d$ has a corresponding positive definite kernel $k: \mathcal{H} \times \mathcal{H} \rightarrow \mathbb{R}$. A simple algebraic manipulation and using the fact that $\boldsymbol{H}^{(1)}$ and $\boldsymbol{H}^{(2)}$ are equinumerous results in
    \begin{equation}
        D^2\left(\boldsymbol{H}^{(1)}, \boldsymbol{H}^{(2)}\right) = \sum_{\substack{\boldsymbol{h} \in \boldsymbol{H}^{(1)} \\ \boldsymbol{h'} \in \boldsymbol{H}^{(1)}}} k(\boldsymbol{h}, \boldsymbol{h'}) + \sum_{\substack{\boldsymbol{h} \in \boldsymbol{H}^{(2)} \\ \boldsymbol{h'} \in \boldsymbol{H}^{(2)}}} k(\boldsymbol{h}, \boldsymbol{h'}) - 2 \sum_{\substack{\boldsymbol{h} \in \boldsymbol{H}^{(1)} \\ \boldsymbol{h'} \in \boldsymbol{H}^{(2)}}} k(\boldsymbol{h}, \boldsymbol{h'}).
    \end{equation}
    Note that $D$ is indeed a \textit{pseudometric} since $k$ is positive definite as noted by \citet{joshi2011comparing}.\footnote{If $k$ is also \textit{integrally strictly positive definite} \citep{sriperumbudur2010hilbert}, then the hash function $G$ becomes injective and $D$ becomes a metric.} By the reproducing property of $k$ and the linearity of the inner product, it may be shown that
    \begin{equation}
        \left\lVert G\left(\boldsymbol{H}^{(1)}\right) - G\left(\boldsymbol{H}^{(2)}\right) \right\rVert = D\left(\boldsymbol{H}^{(1)}, \boldsymbol{H}^{(2)}\right),
    \end{equation}
    where
    \begin{equation}
        G(\boldsymbol{H}) = \sum_{\boldsymbol{h} \in \boldsymbol{H}} g(\boldsymbol{h})
    \end{equation}
    and $g$ is the corresponding feature map of the kernel $k$. Hence, for every $\varepsilon_1 > \varepsilon_2 > 0$, 
    \begin{equation}
        \varepsilon_2 < \left\lVert G\left(\boldsymbol{H}^{(1)}\right) - G\left(\boldsymbol{H}^{(2)}\right)\right\rVert < \varepsilon_1 \iff        \varepsilon_2 < D\left(\boldsymbol{H}^{(1)}, \boldsymbol{H}^{(2)}\right) < \varepsilon_1.
    \end{equation}
\end{proof}

\section{Additional benchmark experiments}
\paragraph{Heterophilous Datasets} Heterophilous Datasets \citep{platonov2023a} is a collection of node property prediction benchmark datasets for evaluating GNNs under heterophily. In particular, the roman-empire, amazon-ratings, tolokers, and questions datasets have uncountable node features. Meanwhile, the minesweeper dataset has countable node features. The performance metric of roman-empire and amazon-ratings is accuracy while the performance metric of the remaining datasets is ROC-AUC. \citet{platonov2023a} provides more information regarding the individual datasets.  

Table~\ref{tab:heterodatasets} presents the mean and standard deviation of the test performance for SIR-GCN, GCN, GraphSAGE, and GAT across the five benchmark datasets with the experimental set-up closely following that of \citet{platonov2023a}. The test performance for the GraphTransformer \citep{ijcai2021p214} and heterophilous GNNs---H$_2$GCN \citep{NEURIPS2020_58ae23d8}, CPGNN \citep{Zhu_Rossi_Rao_Mai_Lipka_Ahmed_Koutra_2021}, GPR-GNN \citep{chien2021adaptive}, FSGNN \citep{MAURYA2022101695}, GloGNN \citep{pmlr-v162-li22ad}, FAGCN \citep{Bo_Wang_Shi_Shen_2021}, GBK-GNN \citep{10.1145/3485447.3512201}, and JacobiConv \citep{pmlr-v162-wang22am}---are also presented as additional baselines. Unsurprisingly, SIR-GCN performs poorly on amazon-ratings, tolokers, and questions as these datasets exhibit near-zero label informativeness \citep{platonov2023characterizing} between key node labels and query node labels, which severely limits the ability of SIR-GCN to learn meaningful relationships between neighboring nodes. Despite this inherent challenge, the performance gap in ROC-AUC between SIR-GCN and the top-performing models on tolokers and questions remains minimal, reflecting its robustness. Conversely, SIR-GCN outperforms classical GNNs, GraphTransformer, and all heterophilous GNNs on roman-empire and minesweeper, underscoring its utility even in heterophilous settings. Overall, the results highlight that SIR-GCN remains competitive in heterophilous graph tasks. Future works may further explore its theoretical and empirical properties under heterophily.

\newpage
\begin{table}[h!]
    \caption{Test performance on Heterophilous Datasets.}
    \label{tab:heterodatasets}
    \centering
    \scalebox{0.85}{
        \begin{tabular}{lccccc}
            \toprule
            Model & roman-empire ($\uparrow$) & amazon-ratings ($\uparrow$) & minesweeper ($\uparrow$) & tolokers ($\uparrow$) & questions ($\uparrow$) \\
            \midrule
            GCN & 73.69 $\pm$ 0.74 & 48.70 $\pm$ 0.63 & 89.75 $\pm$ 0.52 & 83.64 $\pm$ 0.67 & 76.09 $\pm$ 1.27 \\
            GraphSAGE & 85.74 $\pm$ 0.67 & \textBF{53.63 $\pm$ 0.39} & 93.51 $\pm$ 0.57 & 82.43 $\pm$ 0.44 & 76.44 $\pm$ 0.62 \\
            GAT & 80.87 $\pm$ 0.30 & 49.09 $\pm$ 0.63 & 92.01 $\pm$ 0.68 & \textBF{83.70 $\pm$ 0.47} & 77.43 $\pm$ 1.20 \\
            GraphTransformer & 86.51 $\pm$ 0.73 & 51.17 $\pm$ 0.66 & 91.85 $\pm$ 0.76 & 83.23 $\pm$ 0.64 & 77.95 $\pm$ 0.68 \\
            \midrule
            H$_2$GCN & 60.11 $\pm$ 0.52 & 36.47 $\pm$ 0.23 & 89.71 $\pm$ 0.31 & 73.35 $\pm$ 1.01 & 63.59 $\pm$ 1.46 \\
            CPGNN & 63.96 $\pm$ 0.62 & 39.79 $\pm$ 0.77 & 52.03 $\pm$ 5.46 & 73.36 $\pm$ 1.01 & 65.96 $\pm$ 1.95 \\
            GPR-GNN & 64.85 $\pm$ 0.27 & 44.88 $\pm$ 0.34 & 86.24 $\pm$ 0.61 & 72.94 $\pm$ 0.97 & 55.48 $\pm$ 0.91 \\
            FSGNN & 79.92 $\pm$ 0.56 & 52.74 $\pm$ 0.83 & 90.08 $\pm$ 0.70 & 82.76 $\pm$ 0.61 & \textBF{78.86 $\pm$ 0.92} \\
            GloGNN & 59.63 $\pm$ 0.69 & 36.89 $\pm$ 0.14 & 51.08 $\pm$ 1.23 & 73.39 $\pm$ 1.17 & 65.74 $\pm$ 1.19 \\
            FAGCN & 65.22 $\pm$ 0.56 & 44.12 $\pm$ 0.30 & 88.17 $\pm$ 0.73 & 77.75 $\pm$ 1.05 & 77.24 $\pm$ 1.26 \\
            GBK-GNN & 74.57 $\pm$ 0.47 & 45.98 $\pm$ 0.71 & 90.85 $\pm$ 0.58 & 81.01 $\pm$ 0.67 & 74.47 $\pm$ 0.86 \\
            JacobiConv & 71.14 $\pm$ 0.42 & 43.55 $\pm$ 0.48 & 89.66 $\pm$ 0.40 & 68.66 $\pm$ 0.65 & 73.88 $\pm$ 1.16 \\
            \midrule
            SIR-GCN & \textBF{87.67 $\pm$ 0.28} & 46.73 $\pm$ 0.61 & \textBF{94.12 $\pm$ 0.42} & 82.85 $\pm$ 0.72 & 75.33 $\pm$ 1.34 \\
            \bottomrule
        \end{tabular}
    }
\end{table}

\section{Experimental set-up} \label{sec:experiment_setup}
All experiments are performed on a single NVIDIA\textsuperscript{\textregistered} Quadro RTX 6000 (24GB) card using the Deep Graph Library (DGL) \citep{wang2020deep} with PyTorch \citep{paszke2019pytorch} backend. For synthetic datasets, the reported results are obtained from the models at the final epoch across 10 trials with varying seed values. For benchmark datasets, the reported results are obtained from the models with the best validation loss across the 10 trials. The codes to reproduce the results may be found at \url{https://github.com/briangodwinlim/SIR-GCN}.

\subsection{Synthetic datasets}
\paragraph{DictionaryLookup} Adopting \citet{brody2021attentive}, the training dataset consists of 4,000 bipartite graphs, each containing $2n$ nodes with randomly assigned attributes and/or values, while the test dataset comprises 1,000 bipartite graphs with the same configuration. All models utilize a single GNN layer with $4n$ hidden units. A two-layer MLP is also used for GIN and $\sigma$ of SIR-GCN while PNA uses the sum, max, and standard deviation aggregators. Model training is performed with the AdamW \citep{loshchilov2017decoupled} optimizer for 500 epochs with a batch size of 256 and a learning rate of 0.001 that decays by a factor of 0.5 with patience of 10 epochs based on the training loss.

\paragraph{HeteroEdgeCount} The training dataset consists of 4,000 directed graphs, each containing a maximum of 50 nodes with uniformly selected edges using the \texttt{rand\_graph} function of DGL and uniformly assigned node labels from one of $c$ classes using the \texttt{randint} function of PyTorch. These measures ensure that the graphs are sufficiently diverse with respect to graph structure and heterophily. Meanwhile, the test dataset comprises 1,000 directed graphs with the same configuration. All models utilize a single GNN layer with $10c$ hidden units and sum pooling as the graph readout function. A feed-forward neural network is also used for GIN while PNA uses the sum, max, and standard deviation aggregators. Model training is performed with the AdamW optimizer for 500 epochs with a batch size of 256 and a learning rate of 0.001 that decays by a factor of 0.5 with patience of 10 epochs based on the training loss.

\subsection{Benchmark datasets}
\paragraph{Benchmarking GNNs} The datasets are obtained from \texttt{dgl} with data splits (training, validation, test) following \citet{dwivedi2023benchmarking}. In line with \citet{dwivedi2023benchmarking}, \citet{corso2020principal}, and \citet{tailor2021we}, all models utilize 4 GNN layers with batch normalization and residual connections while constrained to a parameter budget of 100,000. Regularization with weights in $\left\{1 \times 10^{-7}, 1 \times 10^{-6}, 1 \times 10^{-5}\right\}$ and dropouts with rates in $\left\{0.1, 0.2, 0.3\right\}$ are also used to prevent overfitting. The mean, symmetric mean, and max aggregators are used since the sum aggregator is observed to not generalize well to unseen graphs as noted by \citet{velivckovic2019neural}. Additionally, sum pooling is used as the graph readout function for ZINC while mean pooling is used for MNIST and CIFAR10. Model training is performed with the AdamW optimizer for a maximum of 500 epochs with a batch size of 128, whenever applicable, and a learning rate of 0.001 that decays by a factor of 0.5 with patience of 10 epochs based on the training loss. The reported results for the other models in Table~\ref{tab:benchmarkinggnns} are obtained from \citet{dwivedi2023benchmarking}, \citet{corso2020principal}, and \citet{tailor2021we}.

\paragraph{Open Graph Benchmark} The datasets are obtained from \texttt{ogb}, the Open Graph Benchmark Python package, with data splits following \citet{hu2020open}. In line with \citet{corso2020principal} and \citet{tailor2021we}, the model for ogbn-arxiv and ogbg-molhiv utilizes 3 and 4 GNN layers, respectively, with batch normalization and residual connections while constrained to a parameter budget of 100,000. Weight decays in $\left\{1 \times 10^{-4}, 1 \times 10^{-3}\right\}$ and dropouts with a rate of 0.2 are also used to prevent overfitting. The symmetric mean and max aggregators are used with mean pooling as the graph readout function for ogbg-molhiv. Model training is performed with the AdamW optimizer for a maximum of 1000 and 100 epochs, respectively, with a batch size of 64 for ogbg-molhiv and learning rates in $\left\{0.001, 0.01\right\}$ that decays by a factor of 0.5 with patience of 40 and 10 epochs, respectively, based on the training loss. The reported results for the other models in Table~\ref{tab:opengraphbenchmark} are obtained from \citet{corso2020principal} and \citet{tailor2021we}.

\paragraph{Heterophilous Datasets} The datasets are obtained from \texttt{dgl} with data splits following \citet{platonov2023a}. In line with \citet{platonov2023a}, the number of GNN layers is chosen from $\left\{3, 5\right\}$, employing residual connections and dropouts with a rate of 0.2. Moreover, the hidden dimension is chosen from $\left\{256, 512\right\}$, utilizing batch and layer normalization as well as the symmetric mean and max aggregators. Model training is performed with the AdamW optimizer for a maximum of 1000 epochs and a learning rate of $3 \times 10^{-5}$. The reported results for the other models in Table~\ref{tab:heterodatasets} are obtained from \citet{platonov2023a}.

\section{Runtime analysis}
As an additional evaluation, the inference runtime for each model in the synthetic datasets is presented in Tables \ref{tab:dictionarylookup_runtime} and \ref{tab:heteroedgecount_runtime}. The results, when considered alongside Tables \ref{tab:dictionarylookup} and \ref{tab:heteroedgecount}, illustrate that SIR-GCN achieves a balance between computational complexity and model expressivity, specifically with regards to PNA which is also designed for uncountable node features. 

\begin{table}[h!]
    \caption{DictionaryLookup inference runtime.}
    \label{tab:dictionarylookup_runtime}
    \centering
    \scalebox{0.85}{
        \begin{tabular}{lccccc}
            \toprule
            Model & $n = 10$ & $n = 20$ & $n = 30$ & $n = 40$ & $n = 50$ \\
            \midrule
            GCN & 0.3526s $\pm$ 0.0778s & 0.4734s $\pm$ 0.0468s & 0.4777s $\pm$ 0.0854s & 0.5619s $\pm$ 0.0518s & 0.5520s $\pm$ 0.0679s \\
            GraphSAGE & 0.4565s $\pm$ 0.0873s & 0.5264s $\pm$ 0.0317s & 0.5716s $\pm$ 0.1132s & 0.7742s $\pm$ 0.0597s & 0.9193s $\pm$ 0.0473s \\
            GATv2 & 0.3950s $\pm$ 0.1017s & 0.5276s $\pm$ 0.0556s & 0.6191s $\pm$ 0.0879s & 0.7472s $\pm$ 0.0346s & 1.0065s $\pm$ 0.0280s \\
            GIN & 0.3696s $\pm$ 0.0899s & 0.4610s $\pm$ 0.0459s & 0.4670s $\pm$ 0.0781s & 0.5947s $\pm$ 0.0548s & 0.5194s $\pm$ 0.0993s \\
            PNA & 0.8854s $\pm$ 0.0412s & 1.1913s $\pm$ 0.1024s & 1.4526s $\pm$ 0.0684s & 1.8793s $\pm$ 0.0528s & 2.8387s $\pm$ 0.0603s \\
            \midrule
            SIR-GCN & 0.4687s $\pm$ 0.0777s & 0.6066s $\pm$ 0.0398s & 0.8053s $\pm$ 0.0485s & 1.1496s $\pm$ 0.0427s & 1.7031s $\pm$ 0.0458s \\
            \bottomrule
        \end{tabular}
    }
\end{table}

\begin{table}[h!]
    \caption{HeteroEdgeCount inference runtime.}
    \label{tab:heteroedgecount_runtime}
    \centering
    \scalebox{0.85}{
        \begin{tabular}{lccccc}
            \toprule
            Model & $c = 2$ & $c = 4$ & $c = 6$ & $c = 8$ & $c = 10$ \\
            \midrule
            GCN & 0.4243s $\pm$ 0.0520s & 0.3852s $\pm$ 0.0517s & 0.3868s $\pm$ 0.0743s & 0.4166s $\pm$ 0.0551s & 0.4177s $\pm$ 0.0494s \\
            GraphSAGE & 0.4691s $\pm$ 0.0400s & 0.4790s $\pm$ 0.0440s & 0.4399s $\pm$ 0.0629s & 0.4501s $\pm$ 0.0603s & 0.4964s $\pm$ 0.0601s \\
            GATv2 & 0.4710s $\pm$ 0.0978s & 0.4941s $\pm$ 0.0567s & 0.4718s $\pm$ 0.0361s & 0.5514s $\pm$ 0.0608s & 0.5437s $\pm$ 0.0724s \\
            GIN & 0.4085s $\pm$ 0.0741s & 0.3875s $\pm$ 0.0627s & 0.3855s $\pm$ 0.0645s & 0.4298s $\pm$ 0.0566s & 0.4329s $\pm$ 0.0534s \\
            PNA & 2.2963s $\pm$ 0.0413s & 2.4238s $\pm$ 0.0611s & 2.4577s $\pm$ 0.0533s & 2.4741s $\pm$ 0.0665s & 2.5623s $\pm$ 0.0425s \\
            \midrule
            SIR-GCN & 0.5338s $\pm$ 0.0353s & 0.5264s $\pm$ 0.0737s & 0.5635s $\pm$ 0.0695s & 0.5764s $\pm$ 0.0401s & 0.6230s $\pm$ 0.0388s \\
            \bottomrule
        \end{tabular}
    }
\end{table} 

\newpage
Table \ref{tab:runtime_complexity} further complements these results by presenting the asymptotic computational runtime complexity of the different GNNs. In particular, SIR-GCN first computes the linear transformations $\boldsymbol{W_Q} \boldsymbol{h_u}$ and $\boldsymbol{W_K} \boldsymbol{h_v}$ for every node which incurs $\mathcal{O}\left(\left|\mathcal{V}\right| \times d_{\text{hidden}} \times d_{\text{in}}\right)$. Afterward, $\sigma\left(\boldsymbol{W_Q} \boldsymbol{h_u} + \boldsymbol{W_K} \boldsymbol{h_v}\right)$ is computed for every edge, using the previously calculated values, and aggregated across the neighbors of each node which incurs $\mathcal{O}\left(\left|\mathcal{E}\right| \times d_{\text{hidden}}\right)$. Finally, the aggregated values are linearly transformed with $\boldsymbol{W_R}$ for every node which incurs $\mathcal{O}\left(\left|\mathcal{V}\right| \times d_{\text{out}} \times d_{\text{hidden}}\right)$\footnote{In the case of SIR-GCN with max aggregation, the linear transformation and the max aggregator cannot be interchanged. Hence, the linear transformation $\boldsymbol{W_R}$ must be performed along edges which incurs $\mathcal{O}\left(\left|\mathcal{E}\right| \times d_{\text{out}} \times d_{\text{hidden}}\right)$.}. In total, since SIR-GCN employs only linear transformations along nodes and only an activation function along edges, its computational complexity is comparable to GCN, GraphSAGE, GAT, GATv2, and GIN. Specifically, these models achieve computational efficiency by maintaining linear complexity along edges attributed to activation functions and neighborhood aggregation. Despite this, SIR-GCN consistently outperforms these classical GNNs across all benchmarks. Notably, SIR-GCN also demonstrates a lower complexity than PNA due to the number of aggregators used, yet delivers superior performance across \textit{most} datasets. These additional analyses further underscore the practical utility of the proposed model.

\begin{table}[h!]
    \caption{Asymptotic runtime complexity.}
    \label{tab:runtime_complexity}
    \centering
    \scalebox{0.85}{
        \begin{tabular}{lc}
            \toprule
            Model & Complexity \\
            \midrule
            GCN & $\mathcal{O}\left(\left|\mathcal{V}\right| \times d_\text{out} \times d_\text{in} + \left|\mathcal{E}\right| \times d_\text{out}\right)$ \\
            GraphSAGE & $\mathcal{O}\left(\left|\mathcal{V}\right| \times d_\text{out} \times d_\text{in} + \left|\mathcal{E}\right| \times d_\text{out}\right)$ \\
            GAT/GATv2 & $\mathcal{O}\left(\left|\mathcal{V}\right| \times d_\text{out} \times d_\text{in} + \left|\mathcal{E}\right| \times d_\text{out} \right)$ \\
            GIN & $\mathcal{O}\left(\left|\mathcal{E}\right| \times d_\text{in} + \left|\mathcal{V}\right| \times \texttt{MLP} \right)$ \\
            PNA & $\mathcal{O}\left(\left|\mathcal{E}\right| \times d_\text{in}^2 + \left|\mathcal{E}\right| \times d_\text{in} \times k + \left|\mathcal{V}\right| \times d_\text{out} \times d_\text{in} \times k\right)$ \\
            \midrule
            SIR-GCN & $\mathcal{O}\left(\left|\mathcal{V}\right| \times d_{\text{hidden}} \times d_{\text{in}} + \left|\mathcal{E}\right| \times d_{\text{hidden}} + \left|\mathcal{V}\right| \times d_{\text{out}} \times d_{\text{hidden}}\right)$ \\
            \bottomrule
            \multicolumn{2}{l}{\small Note: $k$ represents the number of aggregators and scalers in PNA.} 
        \end{tabular}
    }
\end{table}

\section{SIR-GCN extensions} \label{sec:extensions}
Denote $\boldsymbol{h_{u,v}}$ as the feature of the edge connecting node $v$ to node $u$. Following the intuition presented in Section \ref{sec:sirgcn}, SIR-GCN with residual connection may be modified to leverage edge features to obtain
\begin{equation}
    \boldsymbol{h_u^*} = \textsc{MLP}_{\text{Res}}(\boldsymbol{h_u}) + \sum_{v \in \mathcal{N}(u)} \boldsymbol{W_R} ~ \sigma\left(\boldsymbol{W_Q} \boldsymbol{h_u} + \boldsymbol{W_E} \boldsymbol{h_{u,v}} + \boldsymbol{W_K} \boldsymbol{h_v}\right),
\end{equation}
where $\boldsymbol{W_E} \in \mathbb{R}^{d_{\text{hidden}} \times d_{\text{in}}}$. Consequently, this also increases the computational complexity of the model to 
\begin{equation}
    \mathcal{O}\left(\left|\mathcal{E}\right| \times d_{\text{hidden}} \times d_{\text{in}} + \left|\mathcal{V}\right| \times d_{\text{out}} \times d_{\text{hidden}} + \left|\mathcal{V}\right| \times \texttt{MLP}_\text{Res}\right),
\end{equation}
where $\texttt{MLP}_\text{Res}$ denotes the computational complexity of $\textsc{MLP}_{\text{Res}}$, making it comparable to PNA. Similarly, this extension may be viewed as a generalization of GIN with edge features \citep{hu2019strategies}. 

Furthermore, one may also inject inductive bias into the \textit{pseudometrics} which may correspond to specifying the architecture type for the corresponding message function $g$. For instance, if node features are known to have a sequential relationship (\textit{e.g.}, stock \citep{hsu2021fingat} and fMRI \citep{kim2020understanding} data), $g$ may then be aptly modeled using recurrent or convolutional networks.

\end{document}